\documentclass{article}
\usepackage{microtype}
\usepackage{graphicx}
\usepackage{subcaption}
\usepackage{booktabs} %

\usepackage{hyperref}
\usepackage{makecell}

\usepackage[accepted]{icml2026}
\usepackage{amsmath}
\usepackage{amssymb}
\usepackage{mathtools}
\usepackage{amsthm}
\usepackage[capitalize,noabbrev]{cleveref}

\theoremstyle{plain}

\theoremstyle{definition}

\theoremstyle{remark}

\usepackage{multirow}
\usepackage{cleveref}
\usepackage{pgf}
\usepackage{booktabs}
\usepackage{tabularx}
\definecolor{ForestGreen}{RGB}{34,139,34}
\usepackage{dsfont}
\usepackage{amsmath}

\usepackage{hyperref}
\usepackage{url}
\usepackage{tikz}
\usepackage{tikz-cd}
\usepackage{pgfplots}
\pgfplotsset{compat=1.14}
\usepackage{multirow}
\usepackage{tcolorbox}
\definecolor{OliveGreen}{RGB}{34,139,34}

\usepackage{amsmath, amssymb}
\tcbuselibrary{theorems}
\newtcbtheorem[number within=section]{certtheorem}{Theorem}%
{colback=gray!5,
 colframe=gray!50,
 boxrule=0.5pt,
 arc=2pt,
 left=6pt,right=6pt,top=6pt,bottom=6pt,
 fonttitle=\bfseries}{thm}
\usepackage{enumitem}
 \usepackage[framemethod=TikZ]{mdframed}
\definecolor{olivegreen}{RGB}{3, 106, 92}
\definecolor{main}{HTML}{5989cf}
\newmdenv[
  linecolor=olivegreen,
  linewidth=4pt,
  leftline=true,
  topline=false,
  bottomline=false,
  rightline=false,
  backgroundcolor=olivegreen!10, %
  innertopmargin=6pt,
  innerbottommargin=6pt,
  innerleftmargin=8pt,
  innerrightmargin=8pt
]{boxH}
\usepackage{booktabs, multirow, array, makecell}
\usepackage[table]{xcolor}
\usepackage{nicematrix}
\usepackage{algorithm}
\usepackage{algorithmic}

\newif\ifshowcomments
\showcommentsfalse   %

\newcommand{\myparagraph}[1]{\vspace{3pt}\noindent{\bf #1}}

\usepackage[disable,textsize=tiny]{todonotes}
\usepackage{caption} %

\icmltitlerunning{Certified Circuits}

\begin{document}

\twocolumn[
  \icmltitle{Certified Circuits: 
    Stability Guarantees for Mechanistic Circuits}
    
  \icmlsetsymbol{equal}{*}

\begin{icmlauthorlist}
\icmlauthor{Alaa Anani}{mpi,cispa}
\icmlauthor{Tobias Lorenz}{cispa}
\icmlauthor{Bernt Schiele}{mpi}
\icmlauthor{Mario Fritz}{equal,cispa}
\icmlauthor{Jonas Fischer}{equal,mpi}

\end{icmlauthorlist}

\icmlaffiliation{mpi}{Max Planck Institute for Informatics, Saarland Informatics Campus, Saarbr\"ucken, Germany}
\icmlaffiliation{cispa}{CISPA Helmholtz Center for Information Security, Saarbr\"ucken, Germany}

\icmlcorrespondingauthor{Alaa Anani}{aanani@mpi-inf.mpg.de}

  \icmlkeywords{Machine Learning, ICML}

  \vskip 0.3in
  {%
\renewcommand\twocolumn[1][]{#1}%
\begin{center}
\makebox[\textwidth]{%
\begin{minipage}[t]{0.6\textwidth}
    \centering
    \includegraphics[width=\linewidth]{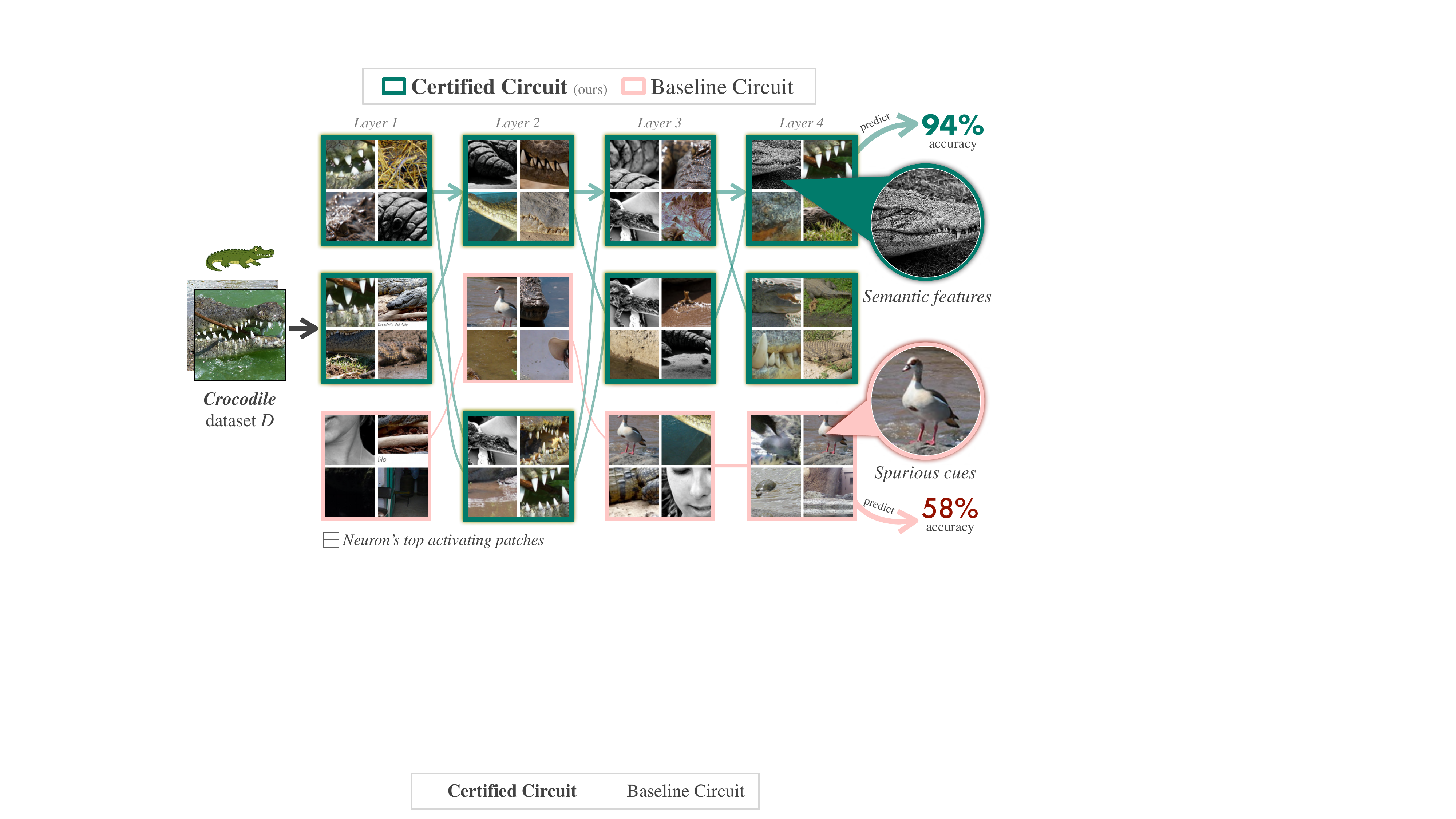}\\[-2pt]
    {\small\textbf{(a)} Certified circuits, here for `African crocodile',\\ remove spurious cues.}
    \label{fig:teaser-left}
\end{minipage}\hfill
\begin{minipage}[t]{0.39\textwidth}
    \centering
    \includegraphics[width=\linewidth]{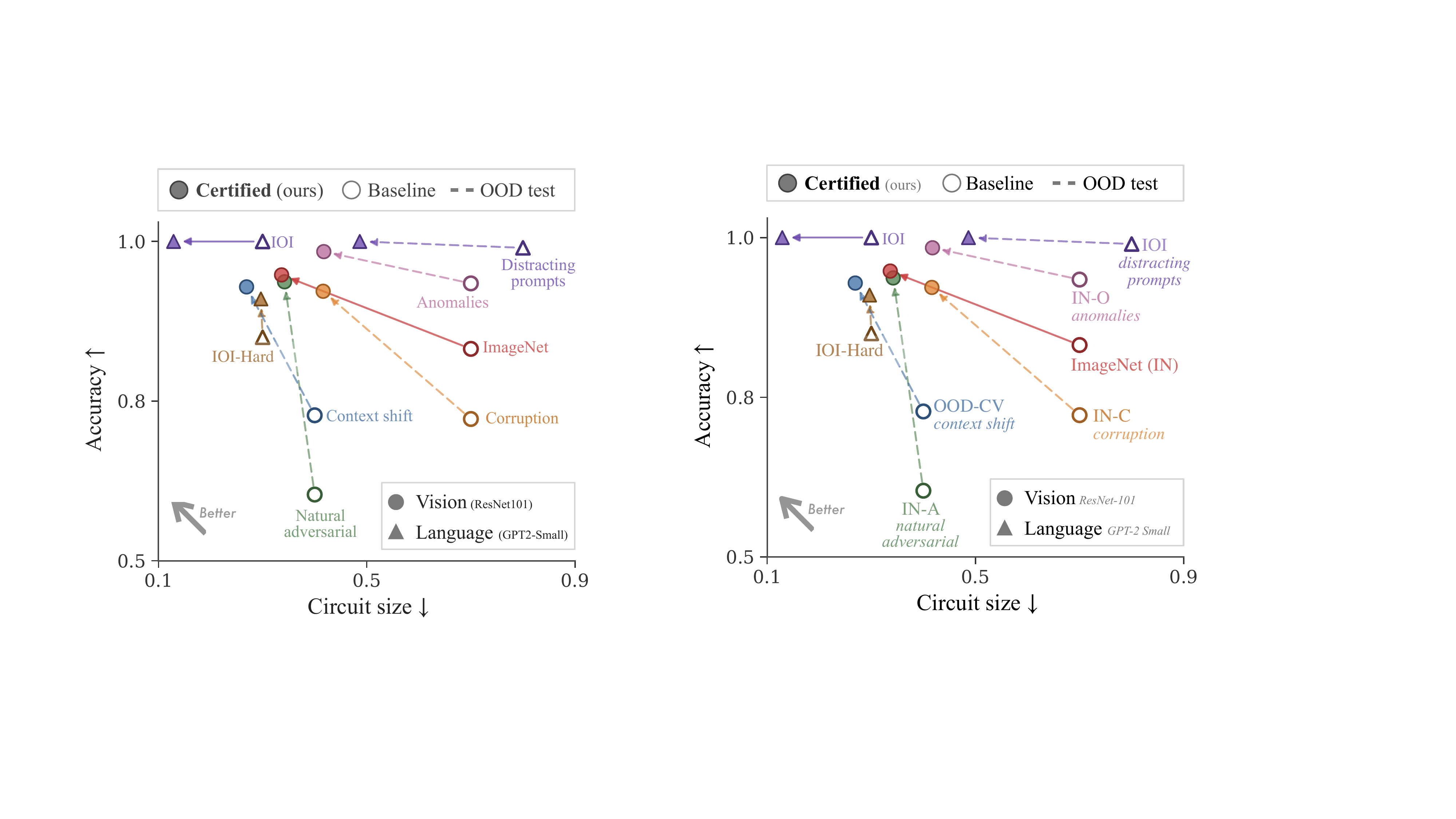}\\[-2pt]
    {\small\textbf{(b)} Certified circuits are more compact (x) \textit{and}\\ more accurate (y).}
    \label{fig:teaser-right}

\end{minipage}%
}
\end{center}

\captionof{figure}{\textbf{Certified circuits are smaller, more accurate, and generalize to OOD.} Given a concept dataset, we isolate a circuit—a subnetwork encoding that concept—that is provably stable under dataset edits. (a) Certified circuits keep stable semantic neurons (e.g., teeth) and abstain from unstable spurious ones (e.g., bird), improving the `African crocodile' circuit accuracy to $94\%$. (b) Across distribution shifts (OOD test), certified circuits are significantly smaller and more accurate than baseline circuits.}
\label{fig:teaser}
\vspace{1.1em}
}

]

\printAffiliationsAndNotice{\icmlEqualContribution}

\begin{abstract}
Understanding \emph{how} neural networks arrive at their predictions is essential for debugging, auditing, and deployment. Mechanistic interpretability pursues this goal by identifying \emph{circuits}---minimal subnetworks responsible for specific behaviors. However, existing circuit discovery methods are brittle: circuits depend strongly on the chosen concept dataset and often fail to transfer out-of-distribution, raising doubts whether they capture the concept or merely dataset-specific artifacts. We introduce \emph{Certified Circuits}, which provide provable stability guarantees for circuit discovery. Our framework wraps any black-box discovery algorithm with randomized data subsampling to certify that inclusion decisions over circuit components---neurons or edges of the model graph, depending on the base algorithm---are invariant to bounded edit-distance perturbations of the concept dataset. Unstable components are abstained from, yielding circuits that are more compact and more accurate. We validate across three architectures (ResNet, ViT, GPT-2) on vision (ImageNet and four OOD datasets) and language (IOI, IOI-Hard, Greater-Than) tasks. Certified circuits achieve up to 56\% higher accuracy and up to 80\% fewer components, and remain reliable where baselines degrade. \emph{Certified Circuits} puts circuit discovery on formal ground by producing mechanistic explanations that are provably stable and better aligned with the target concept. Code: \href{https://github.com/AlaaAnani/certified-circuits}{https://github.com/AlaaAnani/certified-circuits}.

\end{abstract}

\section{Introduction}
\label{sec:introduction}

Understanding \emph{how} neural networks arrive at their predictions is a central challenge in machine learning. Mechanistic interpretability tackles this by identifying \emph{circuits}---minimal subnetworks responsible for specific model behaviors~\citep{olah2020zoom,conmy2023towards,elhage2021mathematical}. In vision models, for instance, a circuit for recognizing ``crocodile'' might comprise particular convolutional filters detecting scales, sharp teeth, and elongated snouts, connected through specific neurons across subsequent layers. Discovering such circuits promises not only scientific insight into learned representations but also practical benefits: debugging failure modes, auditing for biases, and enabling targeted model editing.

Circuit discovery methods have emerged across modalities. In language models, activation patching and causal tracing isolate attention heads and MLP neurons responsible for factual recall, indirect object identification, and other behaviors~\citep{meng2022locating,wang2023interpretability,goldowsky2023localizing}. In vision, analogous methods prune model components to find minimal sufficient subnetworks for recognizing  visual concepts~\citep{rajaram2024automatic, olah2020zoom, dreyer2024pure, zukowska2026seeing}. These approaches start with a \emph{concept dataset}: a collection of inputs representing the target behavior. They then identify which model components are necessary or sufficient to maintain performance on this dataset, discarding the rest. The result is a sparse circuit, i.e., a subgraph of the model graph  intended to capture the mechanistic basis of the behavior.

However, current circuit discovery methods lack robustness~\citep{meloux2025mechanistic,bos2024adversarial,miller2024transformer,friedman2024interpretability}. The identified circuits are sensitive to the choice of concept dataset: adding, removing, or substituting a few semantically equivalent examples can change the discovered circuit unpredictably. They also fail to generalize to out-of-distribution (OOD) data. A circuit found using photographs of crocodiles on land may perform poorly on crocodiles in water, cartoon crocodiles, or crocodiles from unusual angles. Both issues stem from the same underlying problem---current methods overfit to the particular concept dataset rather than recovering the actual concept representation.
This undermines confidence in the mechanistic explanation these methods produce.

To address this, we introduce \emph{Certified Circuits} (Fig.~\ref{fig:method}), a framework that computes the first dataset-level robustness guarantees for circuit discovery. Given a concept dataset $\mathcal{D}$ and any black-box circuit discovery algorithm, we construct a certified circuit $C^*$ with the following guarantee:

\begin{tcolorbox}[colback=gray!5, colframe=gray!50, boxrule=0.5pt, arc=2pt, left=6pt, right=6pt, top=6pt, bottom=6pt]
For any concept dataset $\mathcal{D}'$ within edit distance $r$ of $\mathcal{D}$, the certified circuit $C^*$ provably remains unchanged.
\end{tcolorbox}

Edit distance counts insertions, deletions, or substitutions of examples---so $r=5$ guarantees stability under any combination of up to five such changes. This covers an exponentially large family of datasets.

Our framework is \emph{algorithm-agnostic}: we (i) randomly subsample the concept dataset several times, (ii) run the base algorithm on each subsample to obtain candidate circuits, and (iii) aggregate votes for each individual circuit component to determine which components are guaranteed to represent the concept across perturbed datasets. A key byproduct is \emph{adaptive sparsity}: certification identifies components for which no robust decision can be made, yielding circuits that are more compact and accurate than fixed top-$K$ baselines.

In summary, we make the following \textbf{contributions}:
\begin{enumerate}[noitemsep, topsep=0pt]
    \item We introduce \emph{Certified Circuits}, the first framework providing provable, algorithm-agnostic robustness guarantees for circuit discovery.
    
    \item We derive \textbf{provable bounds} on certified radii and characterize their dependence on a probability threshold and deletion probability required for subsampling.
    
    \item We demonstrate empirically across three architectures (ResNet, ViT, GPT-2), two modalities (vision, language) and four discovery algorithms that \emph{Certified Circuits} are \textbf{more compact}, \textbf{more sufficient}, and \textbf{generalize better to OOD} than uncertified baselines while being \textbf{highly structurally stable}.
\end{enumerate}

We validate Certified Circuits across two modalities and three architectures, evaluating \emph{sufficiency} (does the circuit preserve model behavior?) and \emph{compactness} (how small can the circuit be while sufficient?). For vision, we test ResNet-50/101 and ViT-B/16 on ImageNet and four OOD benchmarks (OOD-CV, ImageNet-A, ImageNet-O, ImageNet-C) with neuron-level top-$K$ scoring (relevance, activation, rank). For language, we test GPT-2 Small on IOI, IOI-Hard, and Greater-Than with EAP-IG \citep{hanna2024have} as the base edge-level algorithm. Certified circuits consistently outperform uncertified baselines: certified accuracy improves by up to 56\% on OOD shifts using up to 41\% fewer neurons, and language circuits use up to 80\% fewer edges at matched or higher accuracy. These gains hold across modalities, architectures, and base algorithms, confirming that certification captures more transferable mechanistic structure while pruning components that are not robustly necessary (Fig.~\ref{fig:teaser}). We further analyze structural convergence when circuits are rediscovered on shifted distributions and across random seeds. Certified Circuits lift classic circuit-based mechanistic explanations to \emph{provably} robust and more compact explanations that empirically better generalize to OOD data.

\section{Related works}
\subsection{Mechanistic Interpretability}
Mechanistic interpretability aims to reverse-engineer the internal computations of neural networks, moving beyond input-output behavior to understand \emph{how} models arrive at their predictions~\citep{olah2020zoom, elhage2021mathematical, bereska2024mechanistic}. The field spans observational methods that analyze learned representations (e.g., probing, sparse autoencoders) and interventional methods that causally localize computations through targeted ablations and activation patching~\citep{featviz1, featviz2,  meng2022locating}. 

\myparagraph{Circuit discovery.}
\textit{Circuit discovery} aims to identify minimal subnetworks (\emph{circuits}) that implement a target behavior or concept. Circuit components---nodes and edges of the computational graph---can correspond to feature channels in CNNs, attention heads or head-to-head connections in transformers, MLP neurons, etc., depending on the chosen granularity \cite{bereska2024mechanistic}. Typically, one (i) defines a concept via a dataset, (ii) represents the model as a computational graph with nodes and edges connecting them, and (iii) extracts a sparse subgraph whose components are necessary and/or sufficient for that behavior \cite{conmy2023towards}. In language models, this approach has revealed circuits for factual recall~\citep{meng2022locating}, indirect object identification~\citep{wang2023interpretability}, and most recently for verifying chain-of-thought reasoning~\citep{zhao2025verifying}. ACDC~\citep{conmy2023towards} automates discovery via iterative edge pruning.  In vision, circuits have been studied via feature-preserving subnetworks~\citep{hamblin2022pruning}, connectivity-based tracing of concept-specific computations~\citep{rajaram2024automatic, wang2019interpretable}, disentanglement of polysemantic neurons into concept circuits~\citep{dreyer2024pure}, and qualitative connectome-style visualizations spanning all layers~\citep{kowal2024visual}.

\myparagraph{Stability limitations.}
A core limitation is that discovered circuits can be \emph{highly unstable}: swapping in different (but semantically equivalent) examples to represent the same concept (or making small additions/removals) can yield substantially different circuits. This raises a basic question: \textit{is the circuit capturing the concept, or overfitting to dataset-specific spurious cues?} Recent work documents and diagnoses such fragility. \citet{meloux2025mechanistic} cast circuit discovery as statistical estimation and show that EAP-IG (a transformer circuit discovery method) circuits change markedly even when the \emph{same behavior} is defined using paraphrased prompts, indicating high variance in the discovered structure. \citet{miller2024transformer} further find that common circuit faithfulness metrics are not robust to evaluation choices. Finally, \citet{friedman2024interpretability} show that explanations can appear faithful on the discovery distribution yet fail to generalize, creating \emph{interpretability illusions}. 

These works highlight the problem but do not provide solutions that rule out instability. Concurrent work uses neural network verification to certify the faithfulness of a \emph{fixed} circuit under small \emph{input-level} perturbations~\citep{hadadprovable}; in contrast, our concern is \emph{dataset-to-circuit} instability, where the \emph{discovered circuit itself} changes when the concept dataset changes. To our knowledge, no prior method certifies invariance of circuit structure under bounded dataset-level perturbations.

\subsection{Robustness Certification}

A \emph{robustness certificate} is a worst-case guarantee of \emph{output invariance}: given an input $x$ and a radius $r$, the certified model provably returns the same prediction for every perturbed input $x'$ with $\mathrm{dist}(x,x') \le r$ (otherwise it abstains). Randomized smoothing yields such certificates by aggregating the base model’s predictions under random perturbations and converting the resulting probability margin into a certified radius~\citep{lecuyer2019certified, cohen2019certified, anani2025pixel}. Beyond classification, smoothing has been extended to (i) \emph{structured outputs} via per-component abstention, certifying only confident components in segmentation~\citep{fischer, anani2024adaptive}, and (ii) \emph{discrete objects} under edit distance, where RS-Del uses randomized deletions to certify invariance against insertions, deletions, and substitutions within an edit budget~\citep{rsdel}. We build on these ideas to certify circuit component stability under \emph{dataset}-level edit perturbations.

\myparagraph{Summary.}
Circuit discovery is unstable: small edits to the concept dataset, even replacing examples with semantically equivalent ones, can produce entirely different circuits, blurring whether the circuit encodes the concept or dataset-specific artifacts. We address this by certifying the \emph{dataset-to-circuit} mapping: using deletion-based smoothing (RS-Del) \cite{rsdel} to model bounded dataset edits, together with circuit component-level certification to exclude the unstable circuit components \cite{fischer}, we return \textit{certified circuits} whose certified components are provably invariant to edits within an edit distance.

\section{Certified Circuits}

To overcome the circuit instability in prior work, we formalize circuit discovery as a dataset-level mapping and ask: which circuit components are \emph{provably stable} under bounded edits to the concept dataset? Our approach is driven by three goals. First, we seek guarantees (not empirical heuristics) that circuit structure is invariant, ruling out brittleness by construction. Second, edit distance provides a threat model: it captures the scenario where a practitioner adds examples, removes outliers, or swaps in semantically equivalent images, and expects the discovered circuit to remain unchanged if it encodes the concept. Third, stability under such edits is a prerequisite for \emph{out-of-distribution generalization}: a circuit that changes when the concept dataset is perturbed cannot be expected to transfer to shifted test distributions. Enumerating all bounded-edit datasets is infeasible, so we use randomized smoothing: we run circuit discovery on many random subsamples and aggregate the outcomes to certify stability for all edits within the radius.

\begin{figure*}[t!]
    \centering
    \includegraphics[width=\textwidth]{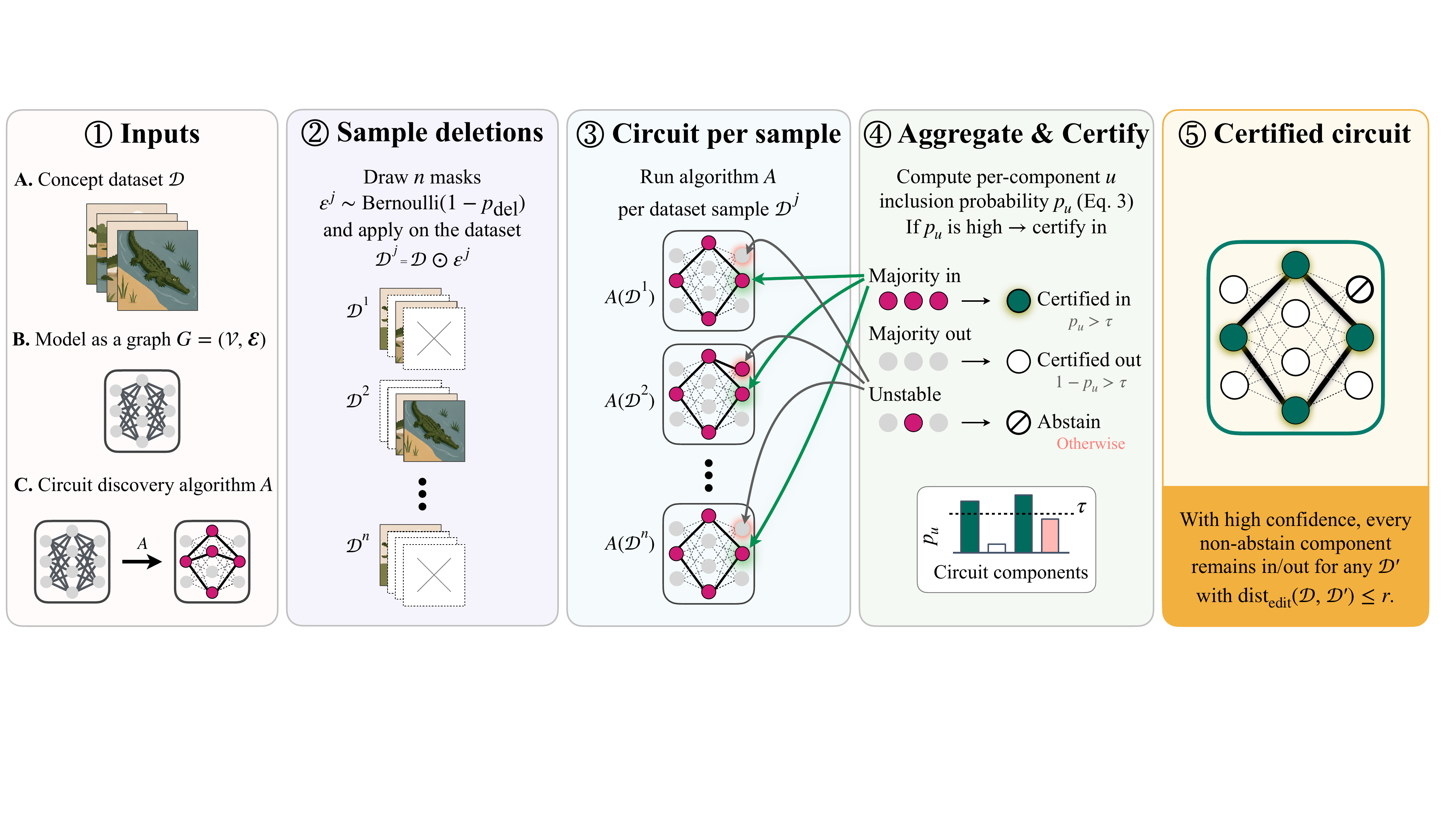}
\caption{\textbf{Certified circuit discovery via concept deletion smoothing.} (\S\ref{sec:setup-inputs}) Given a concept dataset $\mathcal{D}$, model graph $G$, and circuit discovery algorithm $A$: (\S\ref{sec:deletion-smoothing}) We sample dataset variants via per-example deletion with probability $p_{\mathrm{del}}$, (\S\ref{sec:smoothed-discovery}) run $A$ on each to obtain per-sample circuits, (\S\ref{sec:estimating-circuit}) aggregate per-component (e.g., vertex) inclusion frequencies,  (\S\ref{sec:estimating-circuit}) certify components as \emph{in}, \emph{out}, or \emph{abstain} ($\oslash$) based on votes consistency. (\S\ref{sec:cert-properties}) The certified circuit is provably invariant to concept dataset edits within radius $r$.}
\label{fig:method}
\end{figure*}

\myparagraph{Our approach (Figure~\ref{fig:method}).}
Given a model graph $G{=}(\mathcal{V},\mathcal{E})$, a concept dataset $\mathcal{D}$, and any black-box circuit discovery algorithm $A$, we certify which circuit components (vertices $v\in\mathcal{V}$ or edges $e\in\mathcal{E}$) are provably stable under bounded dataset edits. After briefly reviewing randomized smoothing, we formalize our setup (\S\ref{sec:setup-inputs}). We then define \emph{dataset deletion smoothing} (RS-Del~\citep{rsdel}; \S\ref{sec:deletion-smoothing}) and a \emph{smoothed circuit discovery} rule that aggregates component-wise inclusion probabilities and thresholds them to output \emph{certified in/out} or \emph{abstain} (\S\ref{sec:smoothed-discovery}). We present Theorem~\ref{thm:cert-circuit-robustness}, which guarantees certified decisions are invariant for all datasets within edit-distance $r$. Finally, we describe the Monte Carlo estimation of the certified circuit discovery algorithm (\S\ref{sec:estimating-circuit}) and summarize its key properties (\S\ref{sec:cert-properties}).

\subsection*{Background: Randomized Smoothing}
\label{sec:bg}
We introduce randomized smoothing \cite{lecuyer2019certified, cohen2019certified} as the main technical tool for turning empirical stability under random perturbations into \emph{certified} robustness guarantees. Let $f_b:\mathcal{X}\to\mathcal{Y}$ be a base classifier and let $\phi:\mathcal{X}\to D(\mathcal{X})$ be a perturbation mechanism that maps an input $x$ to a distribution $\phi(x)$ over perturbed inputs. Randomized smoothing defines the smoothed classifier
\begin{equation}
f(x) := \arg\max_{y\in\mathcal{Y}} \mathbb{P}_{z\sim \phi(x)}\!\left[f_b(z)=y\right].
\end{equation}
A certificate is obtained by lower-bounding the probability of the predicted class and deriving a radius $r$ such that, with confidence at least $1-\alpha$, the prediction is invariant within the corresponding neighborhood, i.e., $f(x)=f(x')$ for all $x'$ satisfying $\mathrm{dist}(x,x')\le r$.

\subsection{Setup and Inputs}\label{sec:setup-inputs}
Let $G=(\mathcal{V},\mathcal{E})$ denote the model as a directed computational graph, where vertices $\mathcal{V}$ are computation units (e.g., neurons, attention heads) and edges $\mathcal{E}$ represent connections between them. A circuit $C$ is a subgraph of $G$ specified by selecting some subset of components. Let $\mathcal{U}$ denote the set of \emph{circuit components} over which the base algorithm operates: $\mathcal{U} \subseteq \mathcal{V}$ for node-level methods (e.g., neuron-level top-$K$) and $\mathcal{U} \subseteq \mathcal{E}$ for edge-level methods (e.g., EAP-IG). We additionally define a \emph{concept dataset} $\mathcal{D}=(x_1,\ldots,x_{|\mathcal{D}|})\in\mathcal{X}^{*}$, viewed as a finite sequence of inputs that contain the same concept (e.g., same-class images).

Let black-box \emph{circuit discovery algorithm} $A$ map a concept dataset to a binary mask over circuit components:
\begin{equation}\label{eq:circdisc}
A:\mathcal{X}^{*}\;\longrightarrow\;\{0,1\}^{|\mathcal{U}|},
\end{equation}
where $A_u(\mathcal{D})=1$ indicates that component $u\in\mathcal{U}$ is included in the circuit associated with $\mathcal{D}$ and $A_u(\mathcal{D})=0$ indicates exclusion.
Two common examples of $A$ are: (i) node-level top-$K$, which scores vertices on $\mathcal{D}$ (e.g., by mean activation, gradient-based relevance, or per-example rank) and retains the top fraction per layer as sparse mask over $v\in\mathcal{V}$~\cite{olah2020zoom, hamblin2022pruning, rajaram2024automatic, dreyer2024pure}; and (ii) edge-level attribution (e.g., EAP-IG~\citep{hanna2024have}), which scores edges of the computational graph via integrated gradients and retains the top-scoring fraction, yielding a sparse mask over $e\in\mathcal{E}$.

The goal is to construct a \textit{smoothed} (certified) variant $\tilde{A}^{\tau}$ of $A$ whose component-wise decisions are provably stable under bounded edit perturbations of $\mathcal{D}$.

\subsection{Dataset Deletion Smoothing}\label{sec:deletion-smoothing}
To certify robustness of circuit discovery under dataset-level edits, we model the concept dataset $\mathcal{D}=(x_1,\ldots,x_{|\mathcal{D}|})$ as a sequence and measure perturbations via edit distance $\mathrm{dist}_{\mathrm{edit}}(\mathcal{D},\mathcal{D}')$, counting insertions, deletions, and substitutions required to transform $\mathcal{D}$ to $\mathcal{D}'$. Checking stability under all edits is intractable; instead, following RS-Del~\citep{rsdel}, we use \emph{randomized deletions} as a smoothing perturbation, which yields certificates with respect to the full edit distance. Although concept datasets are naturally unordered, we fix an arbitrary ordering solely to define $\mathrm{dist}_{\mathrm{edit}}$; this does not affect the certificate since our base algorithms $A$ depend only on permutation-invariant statistics.

We define a deletion-based perturbation distribution $\phi_{p_{\mathrm{del}}}(\mathcal{D})$
by sampling an i.i.d.\ binary mask
\[
\varepsilon = (\varepsilon_1,\ldots,\varepsilon_{|\mathcal{D}|}), \qquad 
\varepsilon_i \sim \mathrm{Bernoulli}(1-p_{\mathrm{del}}),
\]
where $\varepsilon_i=1$ keeps $x_i$ and $\varepsilon_i=0$ deletes it. This produces an \emph{ordered subsequence}:
\[
\begin{aligned}
\mathcal{D} \odot \varepsilon 
&:= (x_i \mid \varepsilon_i = 1) \\
&= (x_{i_1},\ldots,x_{i_m}), \quad \text{with } 1 \le i_1 < \cdots < i_m \le |\mathcal{D}|.
\end{aligned}
\]
The random sub-dataset $\mathcal{D} \odot \varepsilon$ is distributed according to $\phi_{p_{\mathrm{del}}}(\mathcal{D})$. This perturbation model is the input-side mechanism underlying our certified guarantees, and it directly connects our setting to RS-Del's edit-distance certification under deletion-based smoothing~\citep{rsdel}.

\subsection{Smoothed Circuit Discovery}\label{sec:smoothed-discovery}
Given a base circuit discovery algorithm $A$ and the deletion perturbation distribution $\phi_{p_{\mathrm{del}}}(\mathcal{D})$, we define the \emph{smoothed} circuit discovery algorithm
\[
\tilde{A}^{\tau}:\mathcal{X}^* \to \{0,1,\oslash\}^{|\mathcal{U}|},
\]
which returns for each component $u\in\mathcal{U}$ one of three outcomes: \emph{certified in} (1), \emph{certified out} (0), or \emph{abstain} ($\oslash$). The confidence threshold $\tau\in[0.5,1)$ controls how much posterior mass is required to make a non-abstaining decision: if neither inclusion nor exclusion is sufficiently likely under randomized deletions, the method abstains.

To define $\tilde{A}^{\tau}$, we quantify how consistently algorithm $A$ includes a component $u$ under randomized deletions as the \emph{smoothed inclusion probability}:
\begin{equation}
\begin{aligned}
p_u(\mathcal{D})
&:= \mathbb{P}_{\varepsilon}\!\left[A_u\!\big(\mathcal{D}\odot\varepsilon\big)=1\right],\\
&\qquad \varepsilon_i \sim \mathrm{Bernoulli}(1-p_{\mathrm{del}})\ \text{i.i.d.}
\end{aligned}
\label{eq:pv}
\end{equation}
i.e., the probability that $A$ includes $u$ when run on a randomly deleted sub-dataset $\mathcal{D}\odot\varepsilon$. Values near $1$ mean $u$ is selected almost always (stable inclusion), while values near $0$ mean it is rarely selected (stable exclusion).

The smoothed algorithm $\tilde{A}^{\tau}$ converts these probabilities into \emph{certified} per-component decisions by requiring a margin of at least $\tau$ in favor of inclusion or exclusion. Following segmentation-style smoothing~\citep{fischer}, we set
\begin{equation}
\tilde{A}^{\tau}_u(\mathcal{D}) =
\begin{cases}
1 & \text{if } p_u(\mathcal{D}) > \tau,\\
0 & \text{if } 1-p_u(\mathcal{D}) > \tau,\\
\oslash & \text{otherwise},
\end{cases}
\label{eq:smoothed-a}
\end{equation}
so $\tilde{A}^{\tau}_u(\mathcal{D})=1$ certifies that $u$ is robustly included, $\tilde{A}^{\tau}_u(\mathcal{D})=0$ certifies that it is robustly excluded, and $\tilde{A}^{\tau}_u(\mathcal{D})=\oslash$ abstains when neither decision has enough evidence. We write $\tilde{A}^{\tau}(\mathcal{D}) := (\tilde{A}^{\tau}_u(\mathcal{D}))_{u\in\mathcal{U}} \in \{0,1,\oslash\}^{|\mathcal{U}|}$ for the resulting three-valued mask over all components.

\myparagraph{Guarantees.}\label{par:rcs-guarantee}
The rule defining $\tilde{A}^{\tau}$ converts vote consistency under randomized deletions into a worst-case guarantee over \emph{all} datasets within an edit-distance neighborhood of $\mathcal{D}$. Combining RS-Del~\citep{rsdel} with component-wise certification~\citep{fischer} yields a certified stability guarantee for every non-abstaining component:

\begin{certtheorem}{Certified Circuit Robustness}{cert-circuit-robustness}
With confidence at least $1-\alpha$, for any circuit component $u\in\mathcal{U}$ with
$\tilde{A}_u^{\tau}(\mathcal{D})\in\{0,1\}$ and any perturbed dataset $\mathcal{D}'$
satisfying $\mathrm{dist}_{\mathrm{edit}}(\mathcal{D},\mathcal{D}') \leq r$, the membership decision is invariant:
\[
\tilde{A}_u^{\tau}(\mathcal{D}')=\tilde{A}_u^{\tau}(\mathcal{D}),
\]
where the certified radius is
\begin{equation}
r := \left\lfloor \frac{\log(1.5 - \tau)}{\log p_{\mathrm{del}}} \right\rfloor .
\label{eq:r}
\end{equation}
\end{certtheorem}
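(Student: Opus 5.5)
We reduce the theorem to the RS-Del edit-distance certificate of \citet{rsdel}, applied separately to each component $u\in\mathcal{U}$, in the same component-wise way as in \citet{fischer}. Fix $u$ and view $g_u(\mathcal{D}) := A_u(\mathcal{D})\in\{0,1\}$ as a binary base classifier on sequences. Then $p_u(\mathcal{D})$ is exactly the deletion-smoothed probability of class $1$. The key structural fact is the RS-Del coupling: for two datasets at edit distance $d$, the deletion mechanism can be coupled so that, on a common event of probability at least $p_{\mathrm{del}}^{d}$, both produce the same random subsequence.

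\textbf{Step 1 (deletion-only neighbourhood).} Suppose $\mathcal{D}'$ is obtained from $\mathcal{D}$ by deleting $d$ elements. Deleting those same $d$ positions in $\mathcal{D}$ occurs with probability $p_{\mathrm{del}}^{d}$. Conditioned on that event, the remaining mask on $\mathcal{D}$ has exactly the law of $\phi_{p_{\mathrm{del}}}(\mathcal{D}')$. Hence
\[
p_u(\mathcal{D}) = p_{\mathrm{del}}^{d}\, p_u(\mathcal{D}') + \big(1-p_{\mathrm{del}}^{d}\big)\, q ,
\qquad q\in[0,1].
\]

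\textbf{Step 2 (general edits).} An arbitrary $\mathcal{D}'$ at edit distance $d$ (insertions, deletions, substitutions) shares a common subsequence $\mathcal{Z}$ with $\mathcal{D}$, reachable by deleting at most $d$ elements from each side; a substitution costs one deletion on each side. Following RS-Del, we compare both $p_u(\mathcal{D})$ and $p_u(\mathcal{D}')$ with the smoothed behaviour on the common subsequence. A crude two-sided triangle bound costs $2(1-p_{\mathrm{del}}^d)$. RS-Del's sharper Neyman--Pearson style argument instead gives $|p_u(\mathcal{D})-p_u(\mathcal{D}')|\le 1-p_{\mathrm{del}}^{d}$. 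I will invoke that result directly; it is the main technical input, and I expect it to be the main obstacle. If a self-contained proof is needed, I would construct the coupling in which both masks keep the same elements of $\mathcal{Z}$, so that the outputs agree whenever all of the at most $d$ non-shared elements are deleted on each side.

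\textbf{Step 3 (radius).} Suppose $\tilde{A}^{\tau}_u(\mathcal{D})=1$, so $p_u(\mathcal{D})>\tau$. Then $p_u(\mathcal{D}')>\tau-(1-p_{\mathrm{del}}^{d})$. The decision $1$ is preserved at the level of the underlying majority, $p_u(\mathcal{D}')>1/2$, as soon as $\tau-1+p_{\mathrm{del}}^d\ge 1/2$, i.e.\ $p_{\mathrm{del}}^{d}\ge 1.5-\tau$. Taking logarithms and dividing by $\log p_{\mathrm{del}}<0$ flips the inequality and gives $d\le \log(1.5-\tau)/\log p_{\mathrm{del}}$, which yields the floor in \eqref{eq:r}; the case $\tilde{A}^{\tau}_u(\mathcal{D})=0$ is symmetric with $1-p_u$. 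The invariance asserted in the theorem must therefore be read as invariance of the certified in/out label, i.e.\ the majority vote of the smoothed component, as in \citet{fischer}. I will state explicitly that abstention on $\mathcal{D}'$ is not claimed to be excluded, since the $1.5-\tau$ form only preserves the $1/2$ margin and not the $\tau$ margin.

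\textbf{Step 4 (confidence).} In practice $p_u$ is replaced by a Clopper--Pearson lower bound $\underline{p}_u$ computed from Monte Carlo votes. A Bonferroni or Holm correction across the non-abstained components makes all these bounds hold simultaneously with probability at least $1-\alpha$. On that event, Steps 1--3 applied with $\underline{p}_u>\tau$ give the claim for every certified component at once.
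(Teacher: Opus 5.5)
Your proposal is correct and follows the paper's own route: a per-component reduction to the RS-Del edit-distance certificate, the radius read off from $p_{\mathrm{del}}^{d}\ge 1.5-\tau$, and a Clopper--Pearson bound with a union bound for the $1-\alpha$ confidence. Your caveat is accurate and applies equally to the paper, whose argument also establishes only invariance of the majority label $\tilde{A}_u(\mathcal{D}')=\tilde{A}_u(\mathcal{D})$, not of the $\tau$-thresholded output. Two small fixes remain: the Bonferroni/Holm correction must range over all of $\mathcal{U}$ rather than over the (data-dependent) non-abstained components, and your fallback coupling must pair the non-shared elements of $\mathcal{D}$ and $\mathcal{D}'$ under common coins, since independent coins give agreement probability only $p_{\mathrm{del}}^{2d}$ under $d$ substitutions rather than $p_{\mathrm{del}}^{d}$.
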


The guarantee in Theorem~\ref{thm:cert-circuit-robustness} states that any component that $\tilde{A}^\tau$ certifies as \emph{in} ($1$) or \emph{out} ($0$) of the circuit remains so for all concept datasets $\mathcal{D}'$ with $\mathrm{dist}_{\mathrm{edit}}(\mathcal{D},\mathcal{D}') \leq r$, with confidence at least $1-\alpha$. Components assigned $\oslash$ are excluded from the certified circuit. The proof is in App.~\ref{app:proof}.

\myparagraph{From certified mask to a circuit.}
Given the per-component guarantee in Theorem~\ref{thm:cert-circuit-robustness}, we define the certified circuit as the subgraph of $G$ induced by certified-in components. Let
\begin{equation}
    \mathcal{U}^* := \{u \in \mathcal{U} \mid \tilde{A}^{\tau}_u(\mathcal{D}) = 1\}
\end{equation}
denote the set of certified-in components. The certified circuit $C^* = (\mathcal{V}^*, \mathcal{E}^*)$ is then defined as: (i) For \emph{node-level methods} ($\mathcal{U} \subseteq \mathcal{V}$): $\mathcal{V}^* = \mathcal{U}^*$ and $\mathcal{E}^* = \{(v,w) \in \mathcal{E} \mid v, w \in \mathcal{V}^*\}$.
(ii) For \emph{edge-level methods} ($\mathcal{U} \subseteq \mathcal{E}$): $\mathcal{E}^* = \mathcal{U}^*$ and $\mathcal{V}^* = \bigcup_{(v,w) \in \mathcal{E}^*} \{v, w\}$.
In both cases, components assigned $\oslash$ are excluded as  they are not certifiably stable.

\subsection{Estimating the Certified Circuit}
\label{sec:estimating-circuit}
In practice, the probabilities $p_u(\mathcal{D})$ (Eq.~\ref{eq:pv}) are unknown and are estimated by Monte Carlo sampling: draw $n$ i.i.d.\ deletion masks $\varepsilon^{(1)},\ldots,\varepsilon^{(n)} \sim \mathrm{Bernoulli}(1-p_{\mathrm{del}})^{|\mathcal{D}|}$, evaluate $A_u(\mathcal{D} \odot \varepsilon^{(j)})$ for each $j$, and use the resulting empirical frequency to estimate the lower bound of $p_u(\mathcal{D})$, from which a $(1-\alpha)$ lower confidence bound is computed to decide whether $\tilde{A}_u^{\tau}(\mathcal{D})\in\{0,1\}$ or $\oslash$. We follow the standard evaluation scheme as in~\cite{fischer} and ~\cite{rsdel} in our estimation Algorithm~\ref{alg:certify} in App.~\ref{app:estimate}.

\subsection{Properties of Certified Circuits}
\label{sec:cert-properties}
Smoothed circuit discovery yields certified circuits with three key properties: (i) \textbf{provable dataset-level stability}: all non-abstain certified components are invariant under any sequence of up to $r$ dataset edits; (ii) \textbf{spurious-feature suppression}: components whose membership decisions are unstable across deletions are abstained from, producing strictly sparser circuits (Fig.~\ref{fig:teaser} (a), Fig.~\ref{fig:circuit-digital-watch}, App.~\ref{app:qual}); and (iii) \textbf{algorithm and model agnosticism}: our framework wraps any circuit discovery method $A$ and model $G$ without requiring access to their internals. Next, we discuss that Certified Circuits also have \textit{practical} benefits, better capturing the target concept prediction and generalizing to OOD data.

\label{sec:method}

\section{Experimental Setup}
\label{sec:experimental-setup}
\paragraph{Baseline circuit discovery algorithms.}
We instantiate our certification framework with four standard base
algorithms (Eq.~\ref{eq:circdisc}). For \textit{vision},
we follow the common top-$K$ flow~\citep{hamblin2022pruning,
conmy2023towards,rajaram2024automatic,dreyer2024pure}: candidate vertices $v\in\mathcal{V}$ are
feature channels at each residual block's output, scored and ranked per layer, with top-$K$ fraction retained. We condsider three scorers: \emph{relevance}, \emph{activation}, and \emph{rank} — using relevance by default in ~\S\ref{sec:results} and evaluating others in App.~\ref{app:scorers}.
For \textit{language}, we use EAP-IG~\citep{hanna2024have}, which scores
edges $e\in\mathcal{E}$ via integrated gradients and retains the top-$K$ fraction.
For certified circuits, $K$ denotes the \emph{effective} fraction of components retained after certification (averaged across layers) rather than the base algorithm's target $K$, since certification abstains on unstable components that the uncertified base algorithm would otherwise include. Thus, at a fixed $K$, the certified circuits effective size is always $\leq$ the baseline.

\paragraph{Datasets and architectures.}
For \textit{vision}, we use ImageNet~\citep{russakovsky2015imagenet}
as the ID dataset, with each class defining a concept, and evaluate under four OOD shifts: OOD-CV~\citep{zhao2022oodcv}, ImageNet-A and
ImageNet-O~\citep{hendrycks2019natural}, and ImageNet-C~\citep{hendrycks2019benchmarking}. Circuits are discovered
on $100$ randomly selected classes. Main results use ResNet-101 and ResNet-50~\citep{he2016deep}; ViT-B/16 results are in App.~\ref{app:vit}. For \textit{language}, we evaluate GPT-2 Small~\citep{radford2019language} on three tasks: indirect object identification (IOI)~\citep{wang2023interpretability}, IOI-Hard, and Greater-Than~\citep{hanna2023doesgpt2computegreaterthan}. The task concept dataset is a prompt set; circuits are discovered on ID splits and evaluated on held-out ID and OOD splits. \textit{Vision} and \textit{language} setups are in App.~\ref{app:vision-setup} \&~\ref{app:llm-setup}.

\paragraph{Sufficiency.}
We measure circuit sufficiency by preservation of model predictions when computation is restricted to the discovered circuit,
following standard practice~\citep{conmy2023towards,meng2022locating,
wang2023interpretability,dreyer2024pure}. For class-$c$ circuit
$C_c$, we zero non-circuit channels at each residual block and
evaluate on concept dataset $\mathcal{D}_c$.
We report mean circuit-class accuracy
\begin{equation}
    \text{cACC} := \frac{1}{|\mathcal{Y}|} \sum_{c \in \mathcal{Y}} \text{CA}(C_c, \mathcal{D}_c),
    \label{eq:cACC}
\end{equation}
where $\text{CA}(C_c, \mathcal{D}_c)$ is the pruned model's accuracy
on $\mathcal{D}_c$ and $\mathcal{Y}$ is the set of evaluated classes.
Language sufficiency is analogous, restricting computation to certified-in
edges and measuring task accuracy on the prompt set.

\paragraph{Certification hyperparameters.}
The choice of $p_{\text{del}}$ and $\tau$ trades off certification
strength against the base algorithm's operating conditions: high
$p_{\text{del}}$ deletes more data, hindering the base algorithm,
while high $\tau$ demands higher consistency, increasing the abstain
rate. For \textit{vision}, we default to $p_{\text{del}} = 0.6$ and
$\tau = 0.95$ (certified radius $r=1$) as a good sweet spot. For \textit{language}, where
larger concept datasets tolerate more aggressive deletion, we tune
$p_{\text{del}}$ and $\tau$ per task and report the best configuration (App. Table.~\ref{tab:cert-config}). Results at larger radii are in
App.~\ref{app:vision-large-r} (\textit{vision}) and~\ref{app:llm-large-r}
(\textit{language}). We use $n = 1{,}000$ Monte Carlo samples and failure
probability $\alpha = 0.001$ throughout, following standard
practice~\citep{lecuyer2019certified,cohen2019certified,
anani2025pixel}. The theoretical minimum $n$ is
analyzed in App.~\ref{app:radius-analysis-n}. All certified results
hold with confidence $1-\alpha = 99.9\%$.

\section{Results}
\label{sec:results}

\definecolor{CertHL}{HTML}{E1EDF7}        %
\definecolor{ImprovGreen}{HTML}{037c6b}   %
\colorlet{RegressRed}{red!70!black}       %

\begin{table*}[t]
\centering

\setlength{\tabcolsep}{4pt}
\renewcommand{\arraystretch}{1}
\footnotesize

\begin{NiceTabular}{@{}ll l ccc l ccc l@{}}
\CodeBefore
  \rectanglecolor{CertHL}{3-6}{14-6}
  \rectanglecolor{CertHL}{3-10}{14-10}
\Body
\toprule
& & & \multicolumn{4}{c}{\textbf{cACC} $\uparrow$} & \multicolumn{4}{c}{\textbf{Size} $K$ $\downarrow$} \\
\cmidrule(lr){4-7}\cmidrule(l){8-11}
\textbf{Domain} & \textbf{Setting} & \textbf{Dataset / Task} & Full & Baseline & \cellcolor{CertHL}Certified & $\Delta$ & Full & Baseline & \cellcolor{CertHL}Certified & $\Delta$ \\
\midrule

\multirow{5}{*}{\rotatebox[origin=c]{90}{\makecell[c]{\textbf{Vision}\\[1pt]{\scriptsize\color{gray}ResNet-101}}}}
& ID & ImageNet    & 0.78    & 0.83    & \textbf{0.95}    & {\tiny \textcolor{ImprovGreen}{$\uparrow$14\%}}    & 1.000    & 0.700    & \textbf{0.336}    & {\tiny \textcolor{ImprovGreen}{$\downarrow$\textbf{52\%}}} \\
\cmidrule(l){2-11}
& \multirow{4}{*}{OOD}
                 & ImageNet-A   & 0.07   & 0.60   & \textbf{0.94}   & {\tiny \textcolor{ImprovGreen}{$\uparrow$\textbf{56\%}}}   & \multirow{4}{*}{1.000}   & 0.400   & \textbf{0.342}   & {\tiny \textcolor{ImprovGreen}{$\downarrow$15\%}} \\
&                & OOD-CV       & 0.20 & 0.73 & \textbf{0.93} & {\tiny \textcolor{ImprovGreen}{$\uparrow$28\%}} &                         & 0.400 & \textbf{0.269} & {\tiny \textcolor{ImprovGreen}{$\downarrow$33\%}} \\
&                & ImageNet-C   & 0.57   & 0.72   & \textbf{0.92}   & {\tiny \textcolor{ImprovGreen}{$\uparrow$28\%}}   &                         & 0.700   & \textbf{0.416}   & {\tiny \textcolor{ImprovGreen}{$\downarrow$41\%}} \\
&                & ImageNet-O   & 0.81   & 0.93   & \textbf{0.98}   & {\tiny \textcolor{ImprovGreen}{$\uparrow$6\%}}   &                         & 0.700   & \textbf{0.417}   & {\tiny \textcolor{ImprovGreen}{$\downarrow$41\%}} \\

\midrule

\multirow{6}{*}{\rotatebox[origin=c]{90}{\makecell[c]{\textbf{Language}\\[1pt]{\scriptsize\color{gray}GPT-2 Small}}}}
& \multirow{3}{*}{ID}
                 & IOI          & 1.00  & 1.00  & 1.00  & {\tiny \textcolor{gray!55}{0\%}}  & \multirow{3}{*}{1.000}  & 0.300  & \textbf{0.129}  & {\tiny \textcolor{ImprovGreen}{$\downarrow$58\%}} \\
&                & IOI-Hard     & 1.00 & 1.00 & 1.00 & {\tiny \textcolor{gray!55}{0\%}} &                         & 0.200 & \textbf{0.125} & {\tiny \textcolor{ImprovGreen}{$\downarrow$38\%}} \\
&                & Greater-Than & 1.00   & 1.00   & 1.00   & {\tiny \textcolor{gray!55}{0\%}}   &                         & 0.040   & \textbf{0.010}   & {\tiny \textcolor{ImprovGreen}{$\downarrow$75\%}} \\
\cmidrule(l){2-11}
& \multirow{3}{*}{OOD}
                 & IOI          & 0.98  & 0.99  & \textbf{1.00}  & {\tiny \textcolor{ImprovGreen}{$\uparrow$2\%}}  & \multirow{3}{*}{1.000}  & 0.800  & \textbf{0.487}  & {\tiny \textcolor{ImprovGreen}{$\downarrow$40\%}} \\
&                & IOI-Hard     & 0.79 & 0.85 & \textbf{0.91} & {\tiny \textcolor{ImprovGreen}{$\uparrow$\textbf{8\%}}} &                         & 0.300 & \textbf{0.297} & {\tiny \textcolor{ImprovGreen}{$\downarrow$2\%}} \\
&                & Greater-Than & 1.00   & 1.00   & 1.00   & {\tiny \textcolor{gray!55}{0\%}}   &                         & 0.040   & \textbf{0.008}   & {\tiny \textcolor{ImprovGreen}{$\downarrow$\textbf{80\%}}} \\

\bottomrule
\end{NiceTabular}
\caption{\textbf{Certified vs.\ baseline circuit sufficiency.} Peak cACC and corresponding circuit size $K$ under sufficiency pruning on ResNet-101 for vision datasets and GPT-2 Small for language tasks. Circuits are discovered on ID datasets and evaluated on ID or on OOD shifts. $\Delta$ is relative improvements on unrounded values. 
Certification hyperparameters are in App. Table~\ref{tab:cert-config}. All three vision discovery algorithms in App. Table~\ref{tab:cert-vs-baseline-scorers-uncertainty}.}
\label{tab:cert-vs-baseline}
\end{table*}

We evaluate \emph{Certified Circuits} in four ways: \textbf{sufficiency and compactness} (\S\ref{sec:sufficiency-compactness}), testing whether circuits preserve task performance when used in isolation and how small they can be; \textbf{feature visualization} (\S\ref{sec:qual}), qualitatively inspecting the features encoded by the circuits, \textbf{out-of-distribution generalization} (\S\ref{sec:ood-generalization}), measuring whether circuits discovered in-distribution retain accuracy under distribution shift; and \textbf{structural stability} (\S\ref{sec:structural-stability}), assessing whether circuit structure remains stable when re-discovered on shifted data.

\begin{figure*}[ht!]
    \centering
    \includegraphics[width=\linewidth]{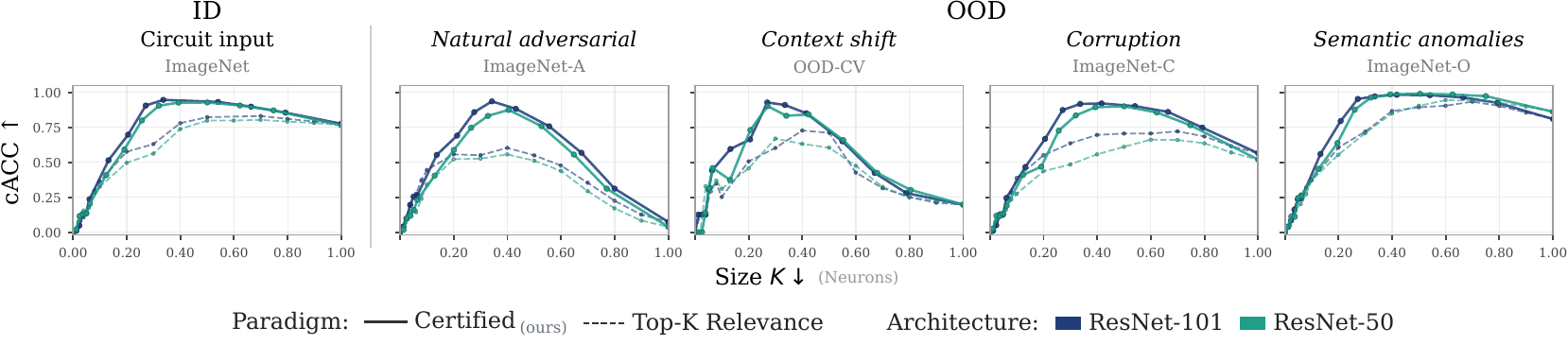}
\caption{\textbf{Circuit accuracy (cACC) vs.\ size $K$.} Solid lines show certified circuits, dashed show the baseline, with colors distinguishing models. Circuits are discovered on ImageNet and evaluated on ID or OOD data. This figure is extended to ViT-B/16 in App. Fig.~\ref{fig:cacc-k-vision-vit} and language circuits in App. Fig.~\ref{fig:llm-best-r} \&~\ref{fig:llm-large-r}.}
\label{fig:cacc-k-vision}
\end{figure*}

\begin{figure*}[ht!]
    \centering
    \includegraphics[width=0.9\linewidth]{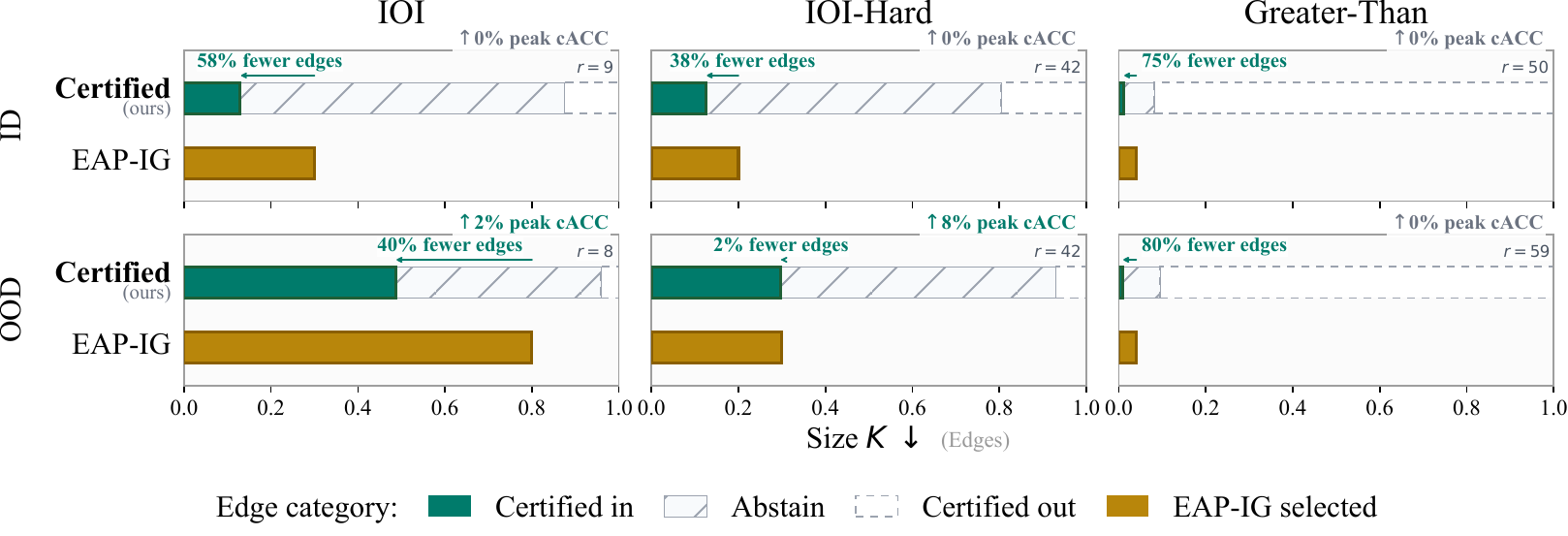}
\caption{Edges of the  \textbf{certified vs.\ baseline EAP-IG circuits on GPT-2 Small at peak cACC.}  Annotations report the relative edge reduction and the gain in peak cACC of the certified circuit over EAP-IG. The certified radius $r$ denotes the best value for every task.}
\label{fig:llm-bar}
\end{figure*}

\begin{figure}[!ht]
    \centering
    \includegraphics[width=\linewidth]{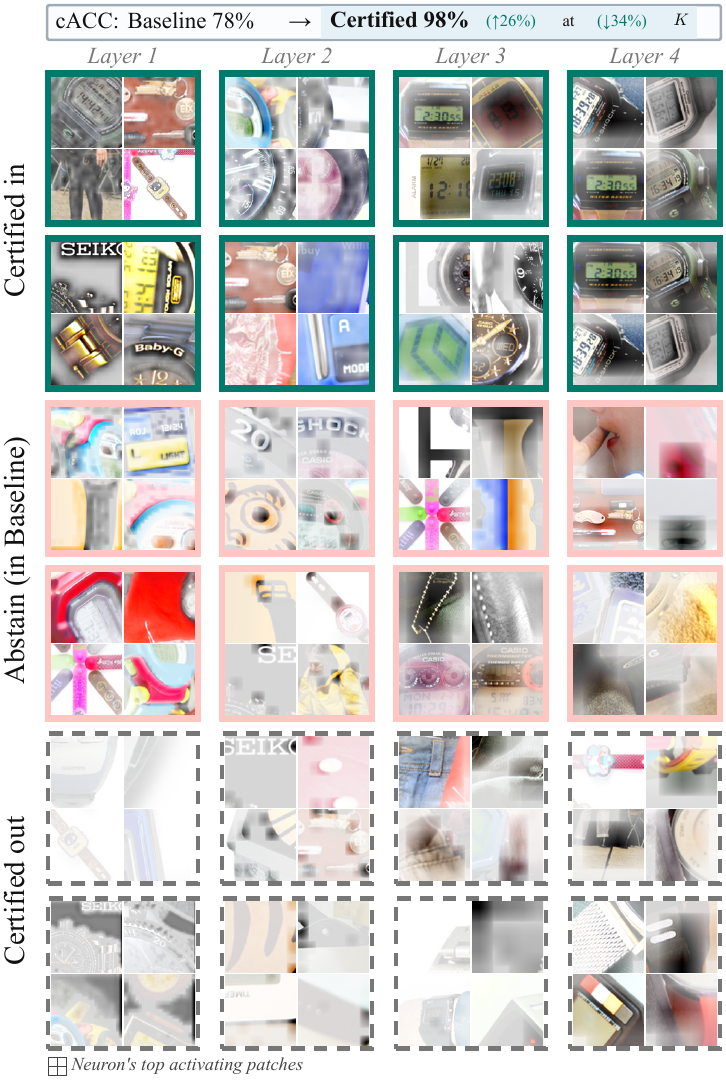}
\caption{\textbf{Features encoded by certified-in, abstain (within Baseline), and certified-out neurons} (class: \texttt{digital watch}, ResNet-101). Per layer, top-activating neurons in each category.}
    \label{fig:circuit-digital-watch}
\end{figure}

\subsection{Circuit Sufficiency and Compactness}
\label{sec:sufficiency-compactness}
We study two questions: \emph{(i) sufficiency}: does the circuit alone preserve the target class prediction? and \emph{(ii) compactness}: how small can the circuit be while remaining sufficient? We sweep the circuit size $K\in(0,1]$ (fraction of components retained), and report (a) the \emph{peak} cACC over $K$ and the corresponding $K$ (Table~\ref{tab:cert-vs-baseline}), (b) the cACC--$K$ curves on \textit{vision} across five datasets (Fig.~\ref{fig:cacc-k-vision}) and (c) the edge-level circuit breakdown on \textit{language} tasks at peak cACC (Fig.~\ref{fig:llm-bar}).

\myparagraph{Sufficiency.}
\textit{Does using the circuit alone preserve the class/task accuracy?} 
Across \textit{vision} and \textit{language}, certified circuits match or exceed baseline peak cACC (and the full model) on all datasets and tasks (Table~\ref{tab:cert-vs-baseline}, Fig.~\ref{fig:cacc-k-vision}). On ImageNet (ID), certified circuits reach $95\%$ vs. the baseline's $83\%$ ($\uparrow 14$\%). On \textit{language}, peak cACC is saturated at 100\% for both methods on all three ID tasks.

\myparagraph{Compactness.}
\textit{How small can the circuit be while sufficient?} Across both modalities, certified circuits reach peak cACC at substantially smaller size $K$ than baselines (Table~\ref{tab:cert-vs-baseline}, Fig.~\ref{fig:cacc-k-vision}  (\textit{vision}), Fig.~\ref{fig:llm-bar} (\textit{language})). On ImageNet, the certified circuit is $52\%$ smaller while improving cACC by $14\%$, with similar reductions across other datasets. On \textit{language}, where sufficiency is already saturated, compactness is the dominant signal: certified circuits match EAP-IG's peak cACC on ID while using $58\%$, $38\%$, and $75\%$ fewer edges on IOI, IOI-Hard, and Greater-Than respectively (Fig.~\ref{fig:llm-bar}).

\myparagraph{Effect of compactness on sufficiency.}
Fig~\ref{fig:cacc-k-vision} plots cACC against the circuit size $K$ across five vision datasets. Both certified and baseline circuits follow a three-stage pattern as $K$ increases: (i) small K yields insufficient circuits that omit class-critical neurons; (ii) intermediate K captures the necessary evidence and cACC peaks; (iii) large K introduces competing or spurious features, reducing class specificity and degrading cACC. Certified circuits shift this curve favorably in both dimensions, peaking at smaller $K$ (more compact) and at higher cACC (more sufficient), pruning components that are unnecessary and harmful, rather than trading one property for the other. On \textit{language}, both curves plateau rather than degrade at large $K$;
certified circuits plateau at higher/equal cACC at smaller $K$   (App. Fig.~\ref{fig:llm-best-r}).

\begin{boxH}
Certified circuits are \textbf{substantially smaller}, \textbf{more accurate} on \textit{vision} and \textbf{matching accuracy} on \textit{language}.
\end{boxH}

\subsection{Feature Visualization of Circuits}
\label{sec:qual}
To inspect what certification removes, Fig.~\ref{fig:circuit-digital-watch} visualizes per-layer neurons in three categories: certified-in, abstain (within the baseline circuit), and certified-out. Certified-in neurons consistently respond to class-defining features (watch faces, dials). Abstained neurons, included by the baseline but dropped by certification, predominantly fire on co-occurring but non-class-specific cues (hands, wrists) or on spurious patterns. Certified-out neurons rarely fire on the concept dataset at all. The resulting \texttt{digital watch} certified circuit improves cACC by $26\%$ at $34\%$ smaller size,
suggesting that abstained components are not merely uninformative but actively harmful, biasing the baseline toward spurious class correlates. Additional classes are in App.~\ref{app:qual}.

\begin{boxH}
Certified circuits \textbf{abstain from spurious cues}.
\end{boxH}

\subsection{Out-of-Distribution Generalization}
\label{sec:ood-generalization}
\textit{Do circuits discovered in-distribution retain accuracy on shifted data?} Certified circuits transfer substantially better than baselines, with the largest gains on the hardest shifts (Table~\ref{tab:cert-vs-baseline}, Fig.~\ref{fig:cacc-k-vision}  (\textit{vision}), Fig.~\ref{fig:llm-bar}  (language)). On \textit{vision}, ImageNet-discovered certified circuits reach $93\%$ cACC on OOD-CV ($\uparrow 28\%$ over baseline) at $33\%$ smaller $K$, and $94\%$ on ImageNet-A ($\uparrow 56\%$) at $15\%$ smaller $K$---shifts where the full model collapses to $20\%$ and $7\%$. On \textit{language}, certified circuits match or improve EAP-IG's peak cACC under OOD prompts at $40\%$, $2\%$, and $80\%$ fewer edges on IOI, IOI-Hard, and Greater-Than (Fig.~\ref{fig:llm-bar}), with up to $\uparrow 8\%$ cACC on IOI-Hard. Results suggest that certified circuits capture concept relevant features that transfer across shifts. Abstention removes components with unstable inclusion under dataset perturbations, often spurious, yielding circuits that generalize beyond the discovery distribution.

\begin{boxH}
Certified circuits \textbf{generalize to OOD shifts}.
\end{boxH}

\subsection{Structural Stability Under Distribution Shift}
\label{sec:structural-stability}
\begin{figure}[!ht]
    \centering
    \includegraphics[width=\linewidth]{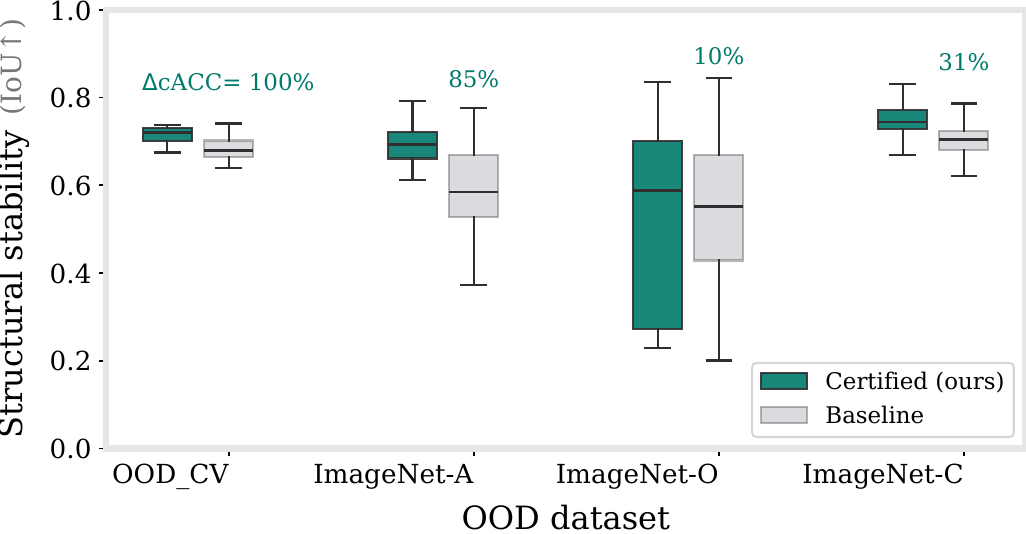}
\caption{\textbf{Structural stability under distribution shift.} Per-class IoU between circuits discovered on ImageNet and re-discovered on each OOD dataset, at the $K$ maximizing the certified--baseline $\Delta$cACC gap. Boxes show the distribution over classes.}
    \label{fig:iou}
\end{figure}

The previous experiments used the same circuit on different distributions. A stronger test asks: \textit{does the certified circuit structure remain stable when re-discovered on shifted distributions of the same concept?} We measure this via IoU between circuits discovered on ImageNet and re-discovered on each shifted dataset (Fig.~\ref{fig:iou}). For two circuits with vertex sets $V$ and $V'$, the IoU is $|V \cap V'| / |V \cup V'|$.

Overall, certified circuits have higher median IoU and tighter distributions than baselines across shifts, indicating more consistent structure. The largest stability gains align with the largest $\Delta$cACC performance gains on OOD-CV ($100\%$) and ImageNet-A ($85\%$). ImageNet-O is the exception, where both methods show high variance, suggesting class-dependent circuit reconfiguration under semantic anomalies.

These findings reinforce the OOD generalization results: abstention suppresses shift-sensitive vertices, concentrating circuits on an invariant core that better captures the underlying concept rather than dataset-specific artifacts.

\begin{boxH}
Certified circuits \textbf{converge to a similar invariant core} when re-discovered on shifted distributions.
\end{boxH}

\section{Discussion and Limitations}
\label{sec:disccusion-limitations}
\myparagraph{Discussion.}
Certified circuits outperform baselines across three architectures (ResNet, ViT, GPT-2) and two modalities (vision, language). For sufficiency and compactness, certification shifts the accuracy--sparsity curve: circuits peak at smaller sizes with higher cACC, suggesting unstable components are not only unnecessary but harmful. Feature visualization confirms this: abstained neurons fire on co-occurring but non-class-specific cues, while certified-in neurons consistently respond to class-defining features. For OOD generalization, certified circuits discovered in-distribution (ImageNet) transfer without re-discovery, improving cACC by up to 56\% on ImageNet-A while reducing circuit size, and producing language circuits up to 80\% smaller at matched or higher accuracy. Structural stability results support this mechanism: certified circuits show higher IoU when re-discovered on OOD data, indicating convergence to an invariant core. Beyond cross-distribution stability, certified circuits also converge across random seeds (App.~\ref{app:seed_stability}), confirming the certification procedure itself is reliable. Together, these findings support our hypothesis that the instability noted in prior work~\citep{meloux2025mechanistic, miller2024transformer, friedman2024interpretability} arises from components inconsistently selected across dataset variants; abstention removes them, yielding circuits that provably reflect the concept rather than dataset-specific artifacts.

\myparagraph{Limitations.}
Certifying a circuit requires running the discovery algorithm on $n$ randomized deletion masks, but caching forward and backward passes across masks keeps wall-clock cost close to running the base algorithm once ($\sim$2.4$\times$ overhead at $n{=}1000$; App.~\ref{app:runtime}), so runtime does not linearly scale in $n$. Larger certified radii require higher deletion rates, which can degrade the base algorithm on very small concept datasets; however, certification recovers full performance at larger $|\mathcal{D}|$, supporting radii up to $r{=}59$  (App.~\ref{app:vision-large-r}, \ref{app:llm-large-r}). We use a single sparsity $K$ across layers; layer-wise budgets inspired by pruning literature may yield finer sparsity allocation. Concurrent work~\citep{hadadprovable} certifies circuits via neural-network verification, but its guarantees are over continuous \emph{input} perturbations to a \emph{fixed} circuit, and its reliance on exact verifiers restricts it to small models and datasets (MNIST, CIFAR-10, GTSRB); our guarantees instead cover \emph{dataset-level} edits to the \emph{discovered} circuit, and scale to any standard vision and language models. While we expect the framework to extend to larger language models, multimodal models, and sparse-activation architectures, experimental validation remains open.

\section{Conclusion}

We introduced \textit{Certified Circuits}, a framework providing provable stability guarantees for circuit discovery. By combining deletion-based randomized smoothing with per-circuit-component abstention, our method certifies that circuits remain unchanged under bounded dataset edits---directly addressing the instability undermining confidence in mechanistic explanations. Empirically, certified circuits are more compact, more sufficient, and generalize better to OOD data, while remaining structurally stable beyond their certified radius. Our work establishes that reliable circuit discovery is achievable: practitioners can obtain mechanistic explanations that are provably stable and empirically robust, bridging interpretability and trustworthiness.

\newpage
\clearpage
\section*{Impact Statement}
This paper advances the field of machine learning by introducing Certified Circuits, a framework that provides formal stability guarantees for mechanistic circuit discovery methods. By enabling provably robust mechanistic explanations, our work strengthens the reliability and scientific validity of interpretability analyses, particularly under dataset variation and distribution shift.

We anticipate that this contribution will have positive downstream impacts in areas where trustworthy model understanding is critical, such as model debugging, robustness evaluation, and auditing of learned representations. More stable and transferable explanations may help practitioners better identify spurious correlations, understand failure modes, and design safer machine learning systems.

At the same time, as with interpretability tools more broadly, there is a risk that certified explanations could be misinterpreted as complete or definitive accounts of model behavior, despite capturing only a subset of the underlying computation. We emphasize that certified circuits provide guarantees relative to a specific threat model and concept dataset, and should be used as one component within a broader interpretability and evaluation toolkit.

Overall, we believe the societal implications of this work are aligned with established goals in machine learning interpretability—improving transparency, robustness, and trustworthiness—and do not raise new ethical concerns beyond those already present in the field.

\bibliography{references}
\bibliographystyle{icml2026}

\clearpage
\appendix
\onecolumn

\section*{Appendix}

\section{Proof of Theorem~\ref{thm:cert-circuit-robustness}}
\label{app:proof}

\begin{proof}
Fix a component $u\in\mathcal U$. Define the per-component base classifier
$h_u:\mathcal X^*\to\{0,1\}$ by $h_u(\mathcal D):=A_u(\mathcal D)$.
Let $\phi_{p_{\mathrm{del}}}(\mathcal D)$ denote the RS-Del perturbation that
independently deletes each element of the sequence $\mathcal D$ with probability
$p_{\mathrm{del}}$, producing a random subsequence $\mathcal D\odot\varepsilon$.

\paragraph{Smoothed probabilities and abstaining decision.}
Define the smoothed inclusion probability
\[
p_u(\mathcal D)\;:=\;\Pr_{\varepsilon\sim\mathrm{Bernoulli}(1-p_{\mathrm{del}})^{|\mathcal D|}}
\!\left[h_u(\mathcal D\odot\varepsilon)=1\right],
\]
and let $p_{u,0}(\mathcal D):=1-p_u(\mathcal D)$ and $p_{u,1}(\mathcal D):=p_u(\mathcal D)$.
Let the (non-abstaining) smoothed label be
\[
\tilde A_u(\mathcal D)\;:=\;\arg\max_{c\in\{0,1\}} p_{u,c}(\mathcal D),
\qquad
\mu_u(\mathcal D)\;:=\;\max\{p_u(\mathcal D),\,1-p_u(\mathcal D)\}.
\]
We output a \emph{certified} (possibly partial) decision by thresholding as in
segmentation-style smoothing with abstention:
\[
\tilde A_u^{\tau}(\mathcal D)=
\begin{cases}
1 & \text{if } p_u(\mathcal D)>\tau,\\
0 & \text{if } 1-p_u(\mathcal D)>\tau,\\
\oslash & \text{otherwise}.
\end{cases}
\]
Note that $\tau\ge \tfrac12$ implies that whenever $\tilde A_u^\tau(\mathcal D)\in\{0,1\}$,
the maximizer $\tilde A_u(\mathcal D)$ is unique and equals $\tilde A_u^\tau(\mathcal D)$.

\paragraph{RS-Del certificate on the input sequence.}
Apply RS-Del \cite{rsdel} to the smoothed binary classifier $\tilde A_u$ under
Levenshtein edit distance (allowing insertions, deletions, and substitutions). RS-Del
(Theorem~7 and Table~1 in \citep{rsdel}) gives that if the predicted class at $\mathcal D$
has confidence $\mu_u(\mathcal D)$, then the smoothed prediction is invariant for any
edit-distance ball of radius $r \le r^\star(\mu_u(\mathcal D))$, where
\[
r^\star(\mu)\;=\;\left\lfloor \frac{\log\!\bigl(1+\nu(\eta)-\mu\bigr)}{\log p_{\mathrm{del}}}\right\rfloor.
\]
For binary outputs and symmetric thresholds (our case), $\nu(\eta)=\tfrac12$, hence
\[
r^\star(\mu)\;=\;\left\lfloor \frac{\log(1.5-\mu)}{\log p_{\mathrm{del}}}\right\rfloor.
\]
Moreover $r^\star(\mu)$ is nondecreasing in $\mu$ because $\log p_{\mathrm{del}}<0$ and
$1.5-\mu$ decreases with $\mu$.

\paragraph{Conclude invariance for certified vertices.}
Assume $\tilde A_u^\tau(\mathcal D)\in\{0,1\}$. Then $\mu_u(\mathcal D)>\tau$, so by
monotonicity $r^\star(\mu_u(\mathcal D)) \ge r^\star(\tau)$. Define
\[
r\;:=\;r^\star(\tau)\;=\;\left\lfloor \frac{\log(1.5-\tau)}{\log p_{\mathrm{del}}}\right\rfloor.
\]
Then for every $\mathcal D'$ with $\mathrm{dist}_{\mathrm{edit}}(\mathcal D,\mathcal D') \leq r$,
RS-Del implies the smoothed label is invariant:
\[
\tilde A_u(\mathcal D')=\tilde A_u(\mathcal D)=\tilde A_u^\tau(\mathcal D).
\]
This is exactly the claimed circuit-membership invariance for all non-abstaining vertices.

\paragraph{Statistical confidence.}
In practice $\mu_u(\mathcal D)$ is unknown and we certify based on a $(1-\alpha)$ lower
confidence bound (e.g., Clopper--Pearson) for $\mu_u(\mathcal D)$. On the event that the
true confidence exceeds this bound (which holds with probability at least $1-\alpha$),
the above argument applies. If one wants the guarantee to hold \emph{simultaneously} for all
vertices \cite{fischer}, set per-vertex failure probability to $\alpha/|\mathcal U|$ and apply a union bound.
\end{proof}

\section{Experimental Setup}
\label{app:setup}

Table~\ref{tab:cert-config} lists the certification hyperparameters
($\tau$, $p_{\text{del}}$, radius $r$) used for each result in
Table~\ref{tab:cert-vs-baseline}.

\begin{table}[!ht]
\centering

\setlength{\tabcolsep}{5pt}
\renewcommand{\arraystretch}{1}
\footnotesize

\begin{tabular}{@{}ll l ccc@{}}
\toprule
\textbf{Domain} & \textbf{Setting} & \textbf{Dataset / Task} & $\tau$ & $p_{\text{del}}$ & $r$ \\
\midrule

\multirow{5}{*}{\rotatebox[origin=c]{90}{\makecell[c]{\textbf{Vision}\\[1pt]{\scriptsize\color{gray}ResNet-101}}}}
& ID & ImageNet    & 0.95    & 0.60    & 1 \\
\cmidrule(l){2-6}
& \multirow{4}{*}{OOD}
                 & ImageNet-A   & 0.95   & 0.60   & 1 \\
&                & OOD-CV       & 0.95 & 0.60 & 1 \\
&                & ImageNet-C   & 0.95   & 0.60   & 1 \\
&                & ImageNet-O   & 0.95   & 0.60   & 1 \\

\midrule

\multirow{6}{*}{\rotatebox[origin=c]{90}{\makecell[c]{\textbf{Language}\\[1pt]{\scriptsize\color{gray}GPT-2 Small}}}}
& \multirow{3}{*}{ID}
                 & IOI          & 0.90  & 0.95  & 9 \\
&                & IOI-Hard     & 0.85 & 0.99 & 42 \\
&                & Greater-Than & 0.90   & 0.99   & 50 \\
\cmidrule(l){2-6}
& \multirow{3}{*}{OOD}
                 & IOI          & 0.85  & 0.95  & 8 \\
&                & IOI-Hard     & 0.85 & 0.99 & 42 \\
&                & Greater-Than & 0.95   & 0.99   & 59 \\

\bottomrule
\end{tabular}
\caption{\textbf{Certified algorithm configuration per dataset and task in Table~\ref{tab:cert-vs-baseline}.}}
\label{tab:cert-config}
\end{table}

\subsection{Vision Setup}
\label{app:vision-setup}
\paragraph{Concept datasets.} Each ImageNet class $c$ defines a concept dataset $\mathcal{D}_c$ as a set of same-class images. We use $|\mathcal{D}_c| = 50$ images per class from the ImageNet validation split. Circuits are discovered on $100$ randomly selected classes and tested on ID (same validation images) and OOD evaluation sets. 

\paragraph{OOD evaluation datasets.} We evaluate ImageNet-discovered circuits under four out-of-distribution shifts, each targeting a distinct failure mode. \textbf{OOD-CV}~\citep{zhao2022oodcv} places objects in novel contexts, backgrounds, poses, and weather conditions (\emph{context shift}). \textbf{ImageNet-A}~\citep{hendrycks2019natural} consists of natural adversarial examples mined to fool standard ImageNet classifiers (\emph{natural adversarial}). \textbf{ImageNet-O}~\citep{hendrycks2019natural} contains images of object categories \emph{not} present in the ImageNet-1k label set (\emph{semantic anomalies}). \textbf{ImageNet-C}~\citep{hendrycks2019benchmarking} applies synthetic corruptions to ImageNet images, and we pick \emph{defocus blur} at severity level $2$ (\emph{corruption}).

For each OOD dataset we use the same $100$ classes as on ID, except OOD-CV which only contains $6$ classes.
\paragraph{Architectures and vertex selection.} Main results use ResNet-101 and ResNet-50~\citep{he2016deep}, ImageNet-pretrained from \texttt{torchvision}. Circuit vertices are the feature channels at the output of each of the four residual blocks, giving $256$, $512$, $1024$, and $2048$ candidate channels per block respectively, for a total of $3{,}840$ vertices on ResNet-101. The top-$K$ fraction is applied per layer. Transformer results use ImageNet-pretrained ViT-B/16 from \texttt{torchvision}. We instrument all 12 transformer encoder blocks. For block $i$, vertices are the 768 embedding-channel outputs of the second linear layer in the MLP sub-block, \texttt{encoder.layers[$i$].mlp[3]} (our hook \texttt{block$i$\_mlp\_fc2}), before the subsequent dropout and residual addition. Token positions are not treated as separate vertices; each vertex corresponds to one MLP-output channel in one block. Thus ViT-B/16 has $12 \times 768 = 9{,}216$ candidate vertices. The top-$K$ fraction is applied independently within each block.

\subsection{Language Setup}
\label{app:llm-setup}

\paragraph{Model and tasks.} We evaluate GPT-2 Small~\citep{radford2019language} via HuggingFace~\citep{wolf2020transformers} on three next-token binary prediction tasks. Each example provides a clean prompt, a corrupted prompt, an answer token, and a distractor. IOI~\citep{wang2023interpretability} predicts the indirect object in a two-name sentence, with corruption randomizing names while preserving the answer/distractor pair. IOI-Hard applies the same rule with longer prompts and harder OOD distractor clauses. For Greater-Than~\citep{hanna2023doesgpt2computegreaterthan}, adapted to the EAP setting~\citep{conmy2023towards, syed2023attribution}, the clean prompt sets up a year span whose immediate successor is the answer, and corruption shifts the interval so the distractor becomes the natural continuation.

\paragraph{Data.} We generate $100$ training, $100$ ID test, and $100$ OOD test examples per task. Circuits are discovered on the training split and evaluated on held-out ID and OOD prompts. OOD splits are task-specific: object/place distractors (IOI), repeated Q\&A detours (IOI-Hard), and pre-answer distractor insertion (Greater-Than). Table~\ref{tab:llm-prompts} shows example ID and OOD prompts.
\begin{table}[ht]
\centering
\small
\renewcommand{\arraystretch}{1.3}
\begin{tabular}{@{}lp{0.42\linewidth}p{0.42\linewidth}@{}}
\toprule
 & \textbf{ID} & \textbf{OOD (ours)} \\
\midrule
\textbf{IOI} & \emph{Then, Jessica and Rachel went to the garden. Rachel gave a drink to} (Jessica / Rachel) & \emph{Then in the morning, Jennifer and Nicholas went to the school. While Nicholas looked around the school, Jennifer talked about the snack, and after a while Nicholas gave a snack to} (Jennifer / Nicholas) \\
\textbf{IOI-Hard} & \emph{Then in the morning, Nicole and Katherine went to the restaurant. Katherine gave a ring to} (Nicole / Katherine) & \emph{Then in the morning, Elizabeth and Allison went to the office. Elizabeth repeatedly asked Allison about the necklace \ldots\ and resumed the discussion of the necklace, Allison gave a necklace to} (Elizabeth / Allison) \\
\textbf{Greater-Than} & \emph{In scenario 920499 at the school with the drink, The war lasted from the year 1799 to the year} (1800 / 1840) & \emph{In scenario 794799 at the station with the necklace, The war lasted from the year 1399 to the year after a long and distracting discussion at the school about the basketball, followed by more unrelated details,} (1400 / 1440) \\
\bottomrule
\end{tabular}
\caption{\textbf{OOD prompt construction for the three language tasks.} ID prompts follow the original task formulations; OOD prompts (ours) introduce task-specific distractors. Each cell shows the prompt followed by (answer / distractor).}
\label{tab:llm-prompts}
\end{table}

\paragraph{Circuit graph and metric.} EAP-IG operates on edges of the EAP computational graph~\citep{syed2023attribution}. Nodes are the token embedding, every attention-head and MLP output, and the final logits. Candidate edges connect each earlier node to every downstream attention-head Q/K/V input, MLP input, and the final logits, yielding $32{,}491$ candidate edges $u\in\mathcal{U}$. Circuit size $K$ is the retained edge fraction, and cACC is exact next-token accuracy (correct iff the argmax under the circuit-restricted computation equals the answer token).

\paragraph{Certification hyperparameters.} We sweep $\tau \in \{0.85, 0.90, 0.95\}$ and $p_{\text{del}} \in \{0.85, 0.95, 0.99\}$. For each task/split, Fig.~\ref{fig:llm-best-r} reports the configuration with the highest peak cACC (ties broken by smaller effective size) while Fig.~\ref{fig:llm-large-r} fixes $r \in \{42, 50, 59\}$ and shows the cACC-$K$ curve across the three tasks.

\section{Certification Hyperparameter Theoretical Analysis}
\label{app:radius-analysis}

\subsection{Certified Radius $r$ vs.\ $p_\text{del}$ and $\tau$}
\label{app:radius-analysis-r}
\begin{figure*}[ht!]
    \centering
    \includegraphics[width=0.6\linewidth]{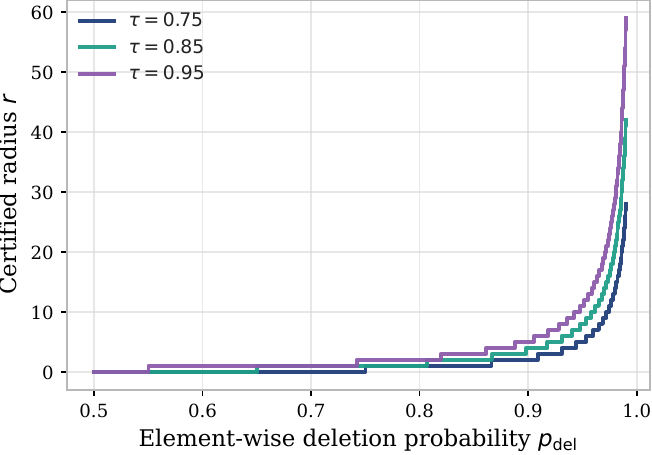}
    \caption{\textbf{Certified radius $r$ at various deletion probability $p_{\mathrm{del}}$ values for different confidence thresholds $\tau$.} Larger $p_{\mathrm{del}}$ and larger $\tau$ yield larger certified radii.}
    \label{fig:radius-analysis}
\end{figure*}
 
The certified radius $r$ in Theorem~\ref{thm:cert-circuit-robustness} is determined by the deletion probability $p_{\mathrm{del}}$ and confidence threshold $\tau$ (Eq.~\ref{eq:r}).

Figure~\ref{fig:radius-analysis} shows the certified radius $r$ at varying $p_{\text{del}}$ and $\tau$ values. The certified radius increases with both $p_{\mathrm{del}}$ and $\tau$: higher deletion probabilities mean the smoothed algorithm aggregates over more aggressively perturbed sub-datasets, while higher confidence thresholds require stronger agreement across perturbations before certifying a decision. As $p_{\mathrm{del}} \to 1$, the radius grows rapidly, but this comes at a cost: the base algorithm $A$ receives increasingly sparse sub-datasets, which can degrade its performance when concept datasets are small. Conversely, lower $p_{\mathrm{del}}$ values preserve more examples per run but yield smaller certified radii.

In practice, we select $p_{\mathrm{del}}$ to balance two considerations: (i) achieving a meaningful certified radius (e.g., $r \geq 1$ edits), and (ii) retaining enough examples per sub-dataset for the base algorithm to produce reliable circuits. For concept datasets of size $|\mathcal{D}|$, the expected sub-dataset size is $(1 - p_{\mathrm{del}}) \cdot |\mathcal{D}|$, so larger concept datasets permit higher deletion probabilities without starving the base algorithm.

\subsection{Sample Complexity}
\label{app:radius-analysis-n}
\begin{figure*}[!ht]
\centering
    \includegraphics[width=0.9\linewidth]{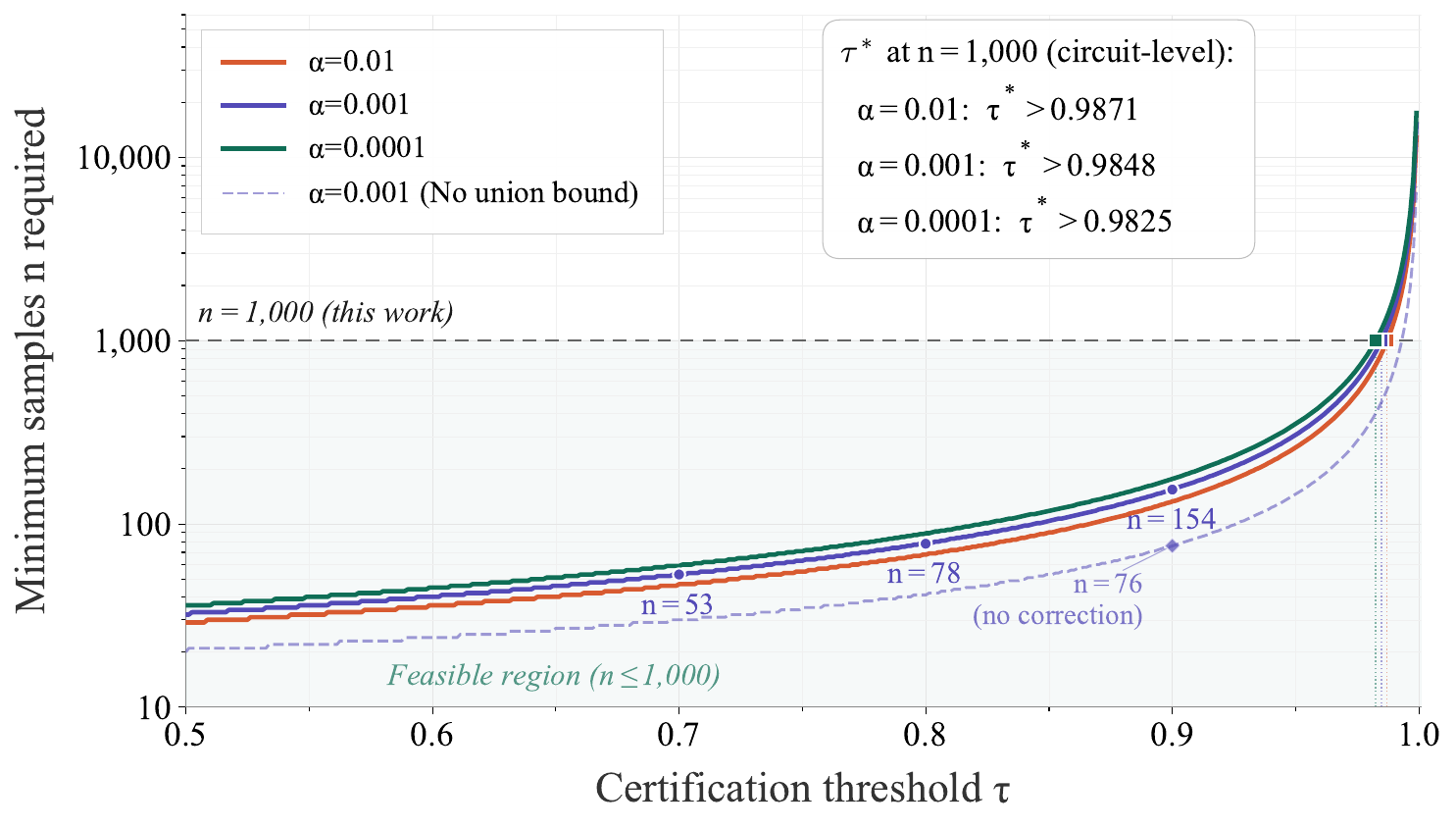}
    \caption{\textbf{Sample complexity vs.\ certification threshold.} Minimum samples $n$ required to certify a circuit at threshold $\tau$ under perfect agreement and Bonferroni correction over $N = 3{,}840$ components (the vertex set $\mathcal{V}$ of ResNet-101). $\tau^\star$ is the largest threshold certifiable at a given $n$; any $\tau \leq \tau^\star$ is certifiable, so larger $n$ raises $\tau^\star$. Solid curves: circuit-level guarantees at $\alpha \in \{0.01, 0.001, 0.0001\}$. Dashed: per-vertex (no correction).}
\label{fig:n-vs-tau}
\end{figure*}

Beyond the certified radius (App.~\ref{app:radius-analysis-r}, Fig.~\ref{fig:radius-analysis}), a second hyperparameter controlling the strength of our guarantees is the Monte Carlo sample budget $n$, which determines how tightly the empirical inclusion frequencies bound the true probabilities $p_u(D)$, which in turn determines whether a component reaches the threshold $\tau$ for certification. Fig.~\ref{fig:n-vs-tau} reports the theoretical minimum $n$ required to certify a circuit under perfect agreement, as a function of $\tau$, with Bonferroni correction over $N = 3{,}840$ components (the number of candidate neurons in ResNet-101).

The required $n$ is small across the entire operating range: certification needs only $53$--$154$ samples for $\tau \in [0.7, 0.9]$, and the union bound costs just $\approx2\times$ over uncorrected per-vertex certification despite covering $3{,}840$ simultaneous decisions. Sampling is therefore not a bottleneck. In practice, the budget of $n = 1{,}000$ used throughout the paper is conservative since it certifies any $\tau$ up to $\tau^* = 0.985$ at $\alpha = 0.001$, and could be reduced an order of magnitude at lower $\tau$ thresholds which produce a meaningful radius $r$.

\subsection{Estimating the Certified Circuit}
\label{app:estimate}

Since we cannot invoke our theoretically constructed smoothed algorithm $\tilde{A}^\tau_u$ (Eq.~\ref{eq:smoothed-a}) directly, we estimate it by Monte Carlo sampling in Algorithm~\ref{alg:certify}. Our estimator is a novel combination of two randomized-smoothing frameworks, unifying them for the first time to certify circuit discovery: we adopt the \emph{randomized-deletion sampling} of RS-Del~\citep{rsdel}, which perturbs a dataset by dropping examples i.i.d. and yields a certificate in \emph{edit distance}, and the \emph{component-wise output certification} of \textsc{SegCertify}~\citep{fischer}, which certifies each component of a structured output independently via a per-component abstention test. The key observation enabling this combination is that certifying circuit components is an instance of \emph{binary segmentation}: each component $u \in \mathcal{U}$ plays the role of a pixel whose inclusion ($1$) or exclusion ($0$) must be certified independently, while the dataset $\mathcal{D}$ — rather than an image — is the object being perturbed. Casting circuit discovery this way lets us inherit \textsc{SegCertify}'s certification algorithm while replacing its Gaussian input perturbation with RS-Del's dataset-level deletion, turning an $\ell_2$ certificate over pixels into an edit-distance certificate over datasets.

\begin{algorithm}[!ht]
\caption{\textsc{CertifyCircuit}: estimating $\tilde{A}^\tau(\mathcal{D})$ (Eq.~\ref{eq:smoothed-a}), combining \textsc{SegCertify}~\citep{fischer} with RS-Del's deletion smoothing~\citep{rsdel}. \textcolor{main}{Blue} marks our changes.}

\label{alg:certify}
\begin{algorithmic}
\STATE \textbf{function} \textsc{CertifyCircuit}(\textcolor{main}{$A$}, \textcolor{main}{$p_{\text{del}}$}, \textcolor{main}{$\mathcal{D}$}, $n$, $n_0$, $\tau$, $\alpha$)
\STATE \quad $\mathrm{cnts}^0_1, \dots, \mathrm{cnts}^0_{\textcolor{main}{|\mathcal{U}|}} \leftarrow$ \textsc{Sample}(\textcolor{main}{$A$}, \textcolor{main}{$\mathcal{D}$}, $n_0$, \textcolor{main}{$p_{\text{del}}$})
\STATE \quad $\mathrm{cnts}_1, \dots, \mathrm{cnts}_{\textcolor{main}{|\mathcal{U}|}} \leftarrow$ \textsc{Sample}(\textcolor{main}{$A$}, \textcolor{main}{$\mathcal{D}$}, $n$, \textcolor{main}{$p_{\text{del}}$})
\STATE \quad \textbf{for} \textcolor{main}{$u \in \mathcal{U}$}:
\STATE \quad\quad $\hat{c}_{\textcolor{main}{u}} \leftarrow$ top index in $\mathrm{cnts}^0_{\textcolor{main}{u}}$
\STATE \quad\quad $n_{\textcolor{main}{u}} \leftarrow \mathrm{cnts}_{\textcolor{main}{u}}[\hat{c}_{\textcolor{main}{u}}]$
\STATE \quad\quad $\mathrm{pv}_{\textcolor{main}{u}} \leftarrow$ \textsc{BinPValue}($n_{\textcolor{main}{u}}$, $n$, $\leq$, $\tau$)
\STATE \quad $r_1, \dots, r_{\textcolor{main}{|\mathcal{U}|}} \leftarrow$ \textsc{FWERControl}($\alpha$, $\mathrm{pv}_1, \dots, \mathrm{pv}_{\textcolor{main}{|\mathcal{U}|}}$)
\STATE \quad \textbf{for} \textcolor{main}{$u \in \mathcal{U}$}:
\STATE \quad\quad \textbf{if} $\neg r_{\textcolor{main}{u}}$: $\hat{c}_{\textcolor{main}{u}} \leftarrow \oslash$
\STATE \quad \textcolor{main}{$r \leftarrow \left\lfloor \log(1.5 - \tau) / \log p_{\text{del}} \right\rfloor$} 
\STATE \quad \textbf{return} $\hat{c}_1, \dots, \hat{c}_{\textcolor{main}{|\mathcal{U}|}}$, \textcolor{main}{$r$}
\end{algorithmic}
\end{algorithm}

\begin{algorithm}[!ht]
\caption{\textsc{Sample}: per-component inclusion counts}
\label{alg:sample}
\begin{algorithmic}
\STATE \textbf{function} \textsc{Sample}(\textcolor{main}{$A$}, \textcolor{main}{$\mathcal{D}$}, $n$, \textcolor{main}{$p_{\text{del}}$})
\STATE \quad $\mathrm{cnts}_{\textcolor{main}{u}} \leftarrow 0$ for all \textcolor{main}{$u \in \mathcal{U}$}
\STATE \quad \textbf{for} $j = 1, \dots, n$:
\STATE \quad\quad \textcolor{main}{$\varepsilon^{(j)} \sim \mathrm{Bernoulli}(1 - p_{\text{del}})^{|\mathcal{D}|}$} \hfill\textcolor{black}{// RS-Del~\citep{rsdel} deletion smoothing}
\STATE \quad\quad \textcolor{main}{$C^{(j)} \leftarrow A(\mathcal{D} \odot \varepsilon^{(j)})$} 
\STATE \quad\quad $\mathrm{cnts}_{\textcolor{main}{u}} \leftarrow \mathrm{cnts}_{\textcolor{main}{u}} + \mathds{1}[\textcolor{main}{u \in C^{(j)}}]$ for all \textcolor{main}{$u \in \mathcal{U}$}
\STATE \quad \textbf{return} $\mathrm{cnts} = (\mathrm{cnts}_{\textcolor{main}{u}})_{\textcolor{main}{u \in \mathcal{U}}}$
\end{algorithmic}
\end{algorithm}

\textsc{CertifyCircuit} (Algorithm~\ref{alg:certify}) proceeds in two stages, each drawing samples through the primitive \textsc{Sample} (Algorithm~\ref{alg:sample}). \textsc{Sample} draws $n$ deletion masks, runs the base algorithm $A$ on each resulting sub-dataset, and counts how often each component is included, giving the per-component $u$ inclusion counts $\mathrm{cnts}_u$. The two stages use independent samples to avoid biasing the test by the choice of hypothesis. In the first stage, \textsc{CertifyCircuit} draws $n_0$ \emph{selection} samples and uses the counts to guess each component's likely decision $\hat{c}_u$ (include or exclude). In the second stage, it draws $n$ fresh \emph{certification} samples and tests that guess: it computes a $p$-value $\mathrm{pv}_u$ measuring how strongly the evidence supports $\hat{c}_u$ at confidence $\tau$. A component is certified if its $p$-value is small enough (via the FWER correction below) and abstains ($\oslash$) otherwise. Finally, \textsc{CertifyCircuit} returns the edit-distance radius $r = \lfloor \log(1.5 - \tau)/\log p_{\text{del}} \rfloor$ (Eq.~\ref{eq:r}, from RS-Del), certifying every non-abstaining component against up to $r$ dataset edits.

\paragraph{Controlling the family-wise error rate.} Because we certify $|\mathcal{U}|$ components simultaneously, the per-component failure probability $\alpha$ does not bound the probability that \emph{any} certified decision is wrong; under the union bound this family-wise error rate (FWER) grows with $|\mathcal{U}|$. We therefore inherit the multiple-hypothesis-testing correction of \citet{fischer}: applying a Bonferroni correction over $\mathcal{U}$, we reject $H_0$ (and thus certify) only when $\rho_u \leq \alpha/|\mathcal{U}|$. This bounds the FWER at $\alpha$, so with confidence $1-\alpha$ every non-abstaining component is certified correctly, and by Theorem~\ref{thm:cert-circuit-robustness} its membership is invariant for all $\mathcal{D}'$ with $\mathrm{dist}_{\text{edit}}(\mathcal{D}, \mathcal{D}') \leq r$.

\section{Runtime Breakdown}
\label{app:runtime}

\begin{table}[ht!]
\centering
\footnotesize
\setlength{\tabcolsep}{8pt}
\renewcommand{\arraystretch}{1.12}
\begin{tabular}{llrrrrrr}
\toprule
\multirow{3}{*}{\textbf{\# Samples}} & \multirow{3}{*}{\textbf{Paradigm}} & \multicolumn{2}{c}{\tikz[baseline=(char.base)]{\node[shape=circle,fill=black,inner sep=1.2pt] (char){\color{white}\scriptsize\textbf{1}};}\ \textbf{Sample circuits}} & \multicolumn{1}{c}{\tikz[baseline=(char.base)]{\node[shape=circle,fill=black,inner sep=1.2pt] (char){\color{white}\scriptsize\textbf{2}};}\ \textbf{Decision rule}} & \multicolumn{2}{c}{\textbf{Total} $=$ \tikz[baseline=(char.base)]{\node[shape=circle,fill=black,inner sep=1.2pt] (char){\color{white}\scriptsize\textbf{1}};} $+$ \tikz[baseline=(char.base)]{\node[shape=circle,fill=black,inner sep=1.2pt] (char){\color{white}\scriptsize\textbf{2}};}} & \multirow{3}{*}{\textbf{Optimized speedup}} \\
 & & \textbf{Naive} & \textbf{Optimized} & & \textbf{Naive} & \textbf{Optimized} & \\
 & & & & & & & \\
\cmidrule(lr){3-4} \cmidrule(lr){6-7}
\midrule
$n=1$ & Baseline & -- & 1.007 s & -- & -- & 1.007 s & -- \\
\midrule
\multirow{2}{*}{$n=100$} & Majority vote & 101 s & 0.990 s & 0.006 s & 101 s & 0.996 s & \textcolor{green!50!black}{${\times}101$} \\
 & Certified & 101 s & 0.990 s & 0.024 s & 101 s & 1.014 s & \textcolor{green!50!black}{${\times}99.33$} \\
\cmidrule(lr){1-8}
\multirow{2}{*}{$n=500$} & Majority vote & 504 s & 1.516 s & 0.013 s & 504 s & 1.529 s & \textcolor{green!50!black}{${\times}329$} \\
 & Certified & 504 s & 1.516 s & 0.102 s & 504 s & 1.618 s & \textcolor{green!50!black}{${\times}311$} \\
\cmidrule(lr){1-8}
\multirow{2}{*}{$n=1000$} & Majority vote & 1007 s & 2.226 s & 0.021 s & 1007 s & 2.247 s & \textcolor{green!50!black}{${\times}448$} \\
 & Certified & 1007 s & 2.226 s & 0.194 s & 1007 s & 2.420 s & \textcolor{green!50!black}{${\times}416$} \\
\bottomrule
\end{tabular}
\caption{\textbf{Runtime breakdown for per-class ResNet-101 circuit discovery on the ImageNet validation set.} Naive scales linearly with $n$ while the optimized implementation caches per-image attribution score (e.g., relevance, activation) across samples. Hardware: Intel Core i7-14700 CPU, 62~GiB RAM, NVIDIA RTX 4090, batch size 50.}
\label{tab:runtime}
\end{table}

A potential concern with certified circuit discovery is sampling: running the base algorithm $n$ times, once per deletion mask, could scale runtime linearly in $n$, making the method impractical at larger $n$. Table~\ref{tab:runtime} shows this concern does not materialize. We benchmark per-class circuit discovery on ResNet-101 over the ImageNet validation set, comparing a naive implementation (which re-runs the full discovery algorithm for each of the $n$ samples) against our optimized implementation that caches per-image scores (relevance) and computes per-sample circuits from the cache. Two findings emerge. First, naive scaling is indeed linear: $n = 1{,}000$ samples take $\approx$$1{,}007$\,s, which is $1{,}000\times$ the baseline single-circuit cost. Second, the optimized implementation reduces this to $2.42$\,s, a $416\times$ speedup, because the dominant cost — forward and backward passes through the model — is shared across all $n$ samples. The decision rule itself (Eq.~\ref{eq:smoothed-a}) takes under $0.2$\,s even at $n = 1{,}000$. The optimized certified pipeline therefore runs in $\approx$$2.4\times$ the cost of a single baseline circuit, independent of $n$ in practice.

\newpage
\section{Additional Language Results}
\label{app:llm}
We extend on language circuit results illustrated earlier in~\S\ref{sec:results}.

\subsection{Certified Circuits at Best Radius $r$} 
\label{app:llm-best-r}

\begin{figure*}[!ht]
    \centering
    \includegraphics[width=0.8\linewidth]{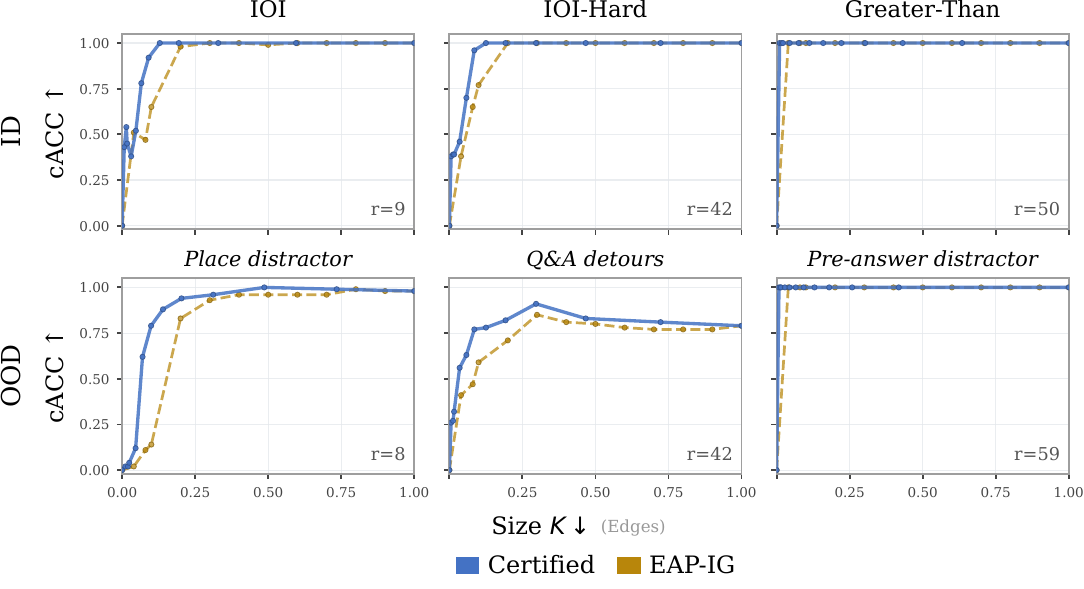}
\caption{\textbf{Best certified vs.\ baseline EAP-IG circuits on GPT-2 Small.} cACC--$K$ curves for the certified circuit (highest peak cACC, blue solid) and EAP-IG (gold dashed) across three tasks (columns) and two settings (rows: ID, OOD). The certified radius $r$ for each circuit is annotated in the bottom-right of each panel.}
\label{fig:llm-best-r}
\end{figure*}

Fig.~\ref{fig:llm-best-r} shows that on every task and setting, the certified curve reaches the EAP-IG peak at substantially smaller $K$, then remains saturated as $K$ grows. On IOI and Greater-Than, both ID and OOD curves are flat at $100\%$ from a small $K$ onward, with the certified curve reaching saturation $2$--$10\times$ earlier than EAP-IG. IOI-Hard under Q\&A detours is the only setting with a non-trivial peak: the certified circuit reaches $91\%$ cACC at $K \approx 0.3$, exceeding the EAP-IG peak ($85\%$) at matched size. Across all settings, certification matches or exceeds EAP-IG's peak cACC while certifying invariance to radii $r = 8$--$59$.

\subsection{Higher Certified Radii}
\label{app:llm-large-r}

\begin{figure*}[!ht]
    \centering
    \includegraphics[width=0.8\linewidth]{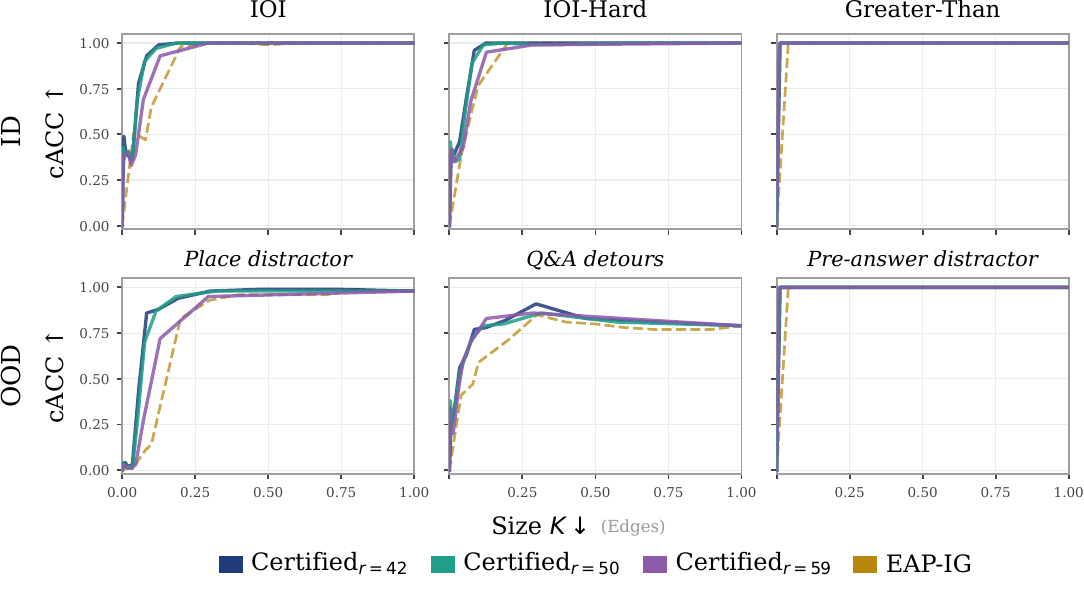}
\caption{\textbf{Certified EAP-IG circuits at higher radii on GPT-2 Small.} cACC--$K$ curves for certified circuits at $r = 42$ (blue), $r = 50$ (teal), and $r = 59$ (purple), compared against EAP-IG (gold dashed) across three tasks (columns) and two settings (rows: ID, OOD). Curves overlap tightly, indicating that larger certified radii are achieved without measurable loss in cACC.}
\label{fig:llm-large-r}
\end{figure*}

Fig.~\ref{fig:llm-large-r} compares certified circuits at $r = 42$, $50$, and $59$ against EAP-IG on the same tasks. All three certified radii produce cACC--$K$ curves that are nearly indistinguishable from each other and that match or exceed EAP-IG at every $K$. On IOI and IOI-Hard, the three certified curves reach saturation at the same small $K$, with no measurable cACC penalty for the larger radius. On Greater-Than and Pre-answer distractor, the three curves overlap so tightly that the $r = 59$ certificate comes essentially for free: stability against up to $59$ dataset edits is achieved with no loss in sufficiency. This confirms that, on tasks with sufficiently large concept datasets, the certified radius can be pushed well beyond the conservative $r = 1$ used for vision without degrading the discovered circuit.

\clearpage
\section{Additional Vision Results}
\label{app:additional-vision}

\subsection{Comparison of three circuit discovery algorithms}
\label{app:scorers}
We extend the analysis in Table~\ref{tab:cert-vs-baseline} from top-$K$ relevance to two additional scorers, \emph{activation} and \emph{rank}, on ResNet-101 (Table~\ref{tab:cert-vs-baseline-scorers-uncertainty}). The main results transfer across all three scorers: certified circuits are more accurate, smaller, and generalize better to OOD than their baselines. The largest cACC gains appear under OOD shifts: on ImageNet-A, certified circuits improve by $\uparrow 56\%$ (relevance), $\uparrow 84\%$ (activation), and $\uparrow 20\%$ (rank). Activation shows the largest relative gains because its uncertified baseline is weakest, while relevance reaches the highest absolute certified cACC ($98\%$ on ImageNet-O) and the largest compactness reductions ($\downarrow 52\%$ on ImageNet, $\downarrow 41\%$ on ImageNet-C and ImageNet-O). Rank yields smaller compactness reductions ($\downarrow 2$--$18\%$), consistent with rank-based scoring producing flatter score distributions that leave fewer components confidently unstable. We use relevance as the default in \S\ref{sec:results} based on its consistently higher absolute cACC and larger compactness gains.

\definecolor{CertHL}{HTML}{E1EDF7}
\definecolor{ImprovGreen}{HTML}{037c6b}
\colorlet{RegressRed}{red!70!black}

\newcommand{\dmse}[1]{\begingroup\scriptsize\,$\pm$#1\endgroup}
\newcommand{\DCACCpos}[1]{{\tiny \textcolor{ImprovGreen}{$\uparrow$#1}}}
\newcommand{\DCACCneg}[1]{{\tiny \textcolor{RegressRed}{$\downarrow$#1}}}
\newcommand{\DKpos}[1]{{\tiny \textcolor{ImprovGreen}{$\downarrow$#1}}}
\newcommand{\DKneg}[1]{{\tiny \textcolor{RegressRed}{$\uparrow$#1}}}
\newcommand{\Dzero}{{\tiny \textcolor{gray!55}{0\%}}}

\begin{table}[!ht]
\centering
\setlength{\tabcolsep}{3.5pt}
\renewcommand{\arraystretch}{1.1}
\footnotesize

\begin{NiceTabular}{@{}l l l cccc cccc@{}}
\CodeBefore
  \rectanglecolor{CertHL}{3-6}{17-6}
  \rectanglecolor{CertHL}{3-10}{17-10}
\Body
\toprule
& & & \multicolumn{4}{c}{\textbf{cACC} $\uparrow$} & \multicolumn{4}{c}{\textbf{Size} $K$ $\downarrow$} \\
\cmidrule(lr){4-7}\cmidrule(l){8-11}
\textbf{Setting} & \textbf{Dataset} & \textbf{Top-$K$}
& Full & Baseline & \cellcolor{CertHL}Certified & $\Delta$
& Full & Baseline & \cellcolor{CertHL}Certified & $\Delta$ \\
\midrule

\multirow{3}{*}{ID}
  & \multirow{3}{*}{ImageNet}
    & Relevance   & \multirow{3}{*}{0.78\dmse{0.02}} & 0.83\dmse{0.02}  & \textbf{ 0.95\dmse{0.02}$^{\S}$ }  & \DCACCpos{\textbf{14\%}}  & \multirow{3}{*}{1.000} & 0.700  & \textbf{ 0.336 }  & \DKpos{\textbf{52\%}} \\
& & Activation  &                                       & 0.80\dmse{0.02}  & \textbf{ 0.82\dmse{0.02}$^{\S}$ }  & \DCACCpos{3\%}  &                        & 0.700  & \textbf{ 0.502 }  & \DKpos{29\%} \\
& & Rank        &                                       & 0.81\dmse{0.02} & \textbf{ 0.82\dmse{0.02}$^{\ddagger}$ } & \DCACCpos{2\%} &                        & 0.700 & \textbf{ 0.585 } & \DKpos{17\%} \\
\midrule

\multirow{12}{*}{OOD}
  & \multirow{3}{*}{ImageNet-A}
    & Relevance   & \multirow{3}{*}{0.07\dmse{0.01}} & 0.60\dmse{0.03}  & \textbf{ 0.94\dmse{0.01}$^{\S}$ }  & \DCACCpos{56\%}  & \multirow{3}{*}{1.000} & 0.400  & \textbf{ 0.342 }  & \DKpos{15\%} \\
& & Activation  &                                         & 0.20\dmse{0.02}  & \textbf{ 0.36\dmse{0.03}$^{\S}$ }  & \DCACCpos{\textbf{84\%}}  &                        & 0.400  & \textbf{ 0.330 }  & \DKpos{\textbf{18\%}} \\
& & Rank        &                                         & 0.24\dmse{0.02} & \textbf{ 0.28\dmse{0.02}$^{\S}$ } & \DCACCpos{20\%} &                        & 0.600 & \textbf{ 0.588 } & \DKpos{2\%} \\
\cmidrule(l){2-11}
& \multirow{3}{*}{OOD-CV}
    & Relevance   & \multirow{3}{*}{0.20\dmse{0.09}} & 0.73\dmse{0.11}  & \textbf{ 0.93\dmse{0.03}$^{\dagger}$ }  & \DCACCpos{28\%}  & \multirow{3}{*}{1.000} & 0.400  & \textbf{ 0.269 }  & \DKpos{33\%} \\
& & Activation  &                                     & 0.36\dmse{0.12}  & \textbf{ 0.54\dmse{0.13}$^{\dagger}$ }  & \DCACCpos{\textbf{51\%}}  &                        & 0.400  & \textbf{ 0.268 }  & \DKpos{\textbf{33\%}} \\
& & Rank        &                                     & 0.44\dmse{0.13} & \textbf{ 0.46\dmse{0.12} } & \DCACCpos{5\%} &                        & 0.400 & \textbf{ 0.394 } & \DKpos{2\%} \\
\cmidrule(l){2-11}
& \multirow{3}{*}{ImageNet-C}
    & Relevance   & \multirow{3}{*}{0.57\dmse{0.02}} & 0.72\dmse{0.02}  & \textbf{ 0.92\dmse{0.02}$^{\S}$ }  & \DCACCpos{\textbf{28\%}}  & \multirow{3}{*}{1.000} & 0.700  & \textbf{ 0.416 }  & \DKpos{\textbf{41\%}} \\
& & Activation  &                                         & 0.66\dmse{0.02}  & \textbf{ 0.71\dmse{0.02}$^{\S}$ }  & \DCACCpos{8\%}  &                        & 0.700  & \textbf{ 0.502 }  & \DKpos{29\%} \\
& & Rank        &                                         & 0.67\dmse{0.02} & \textbf{ 0.69\dmse{0.02}$^{\ddagger}$ } & \DCACCpos{3\%} &                        & 0.700 & \textbf{ 0.585 } & \DKpos{17\%} \\
\cmidrule(l){2-11}
& \multirow{3}{*}{ImageNet-O}
    & Relevance   & \multirow{3}{*}{0.81\dmse{0.01}} & 0.93\dmse{0.01}  & \textbf{ 0.98\dmse{0.01}$^{\S}$ }  & \DCACCpos{\textbf{6\%}}  & \multirow{3}{*}{1.000} & 0.700  & \textbf{ 0.417 }  & \DKpos{\textbf{41\%}} \\
& & Activation  &                                         & 0.87\dmse{0.01}  & \textbf{ 0.90\dmse{0.01}$^{\dagger}$ }  & \DCACCpos{3\%}  &                        & 0.700  & \textbf{ 0.492 }  & \DKpos{30\%} \\
& & Rank        &                                         & 0.88\dmse{0.01} & \textbf{ 0.89\dmse{0.01}$^{\dagger}$ } & \DCACCpos{2\%} &                        & 0.700 & \textbf{ 0.581 } & \DKpos{18\%} \\

\bottomrule
\end{NiceTabular}

\caption{\textbf{Certified vs.\ baseline circuit sufficiency across three Top-K scoring algorithms with uncertainty estimates.} Peak cACC and corresponding circuit size $K$ under sufficiency pruning on ResNet-101. Circuits are discovered on ImageNet and evaluated either in-distribution or on OOD shifts. Significance markers compare certified circuits against the corresponding baseline circuit using paired class-level tests: ${}^{\dagger}p<0.05$, ${}^{\ddagger}p<0.01$, ${}^{\S}p<0.001$.}
\label{tab:cert-vs-baseline-scorers-uncertainty}

\end{table}

\subsection{Stability Across Random Seeds}
\label{app:seed_stability}
\begin{figure*}[!ht]
    \centering
    \includegraphics[width=\linewidth]{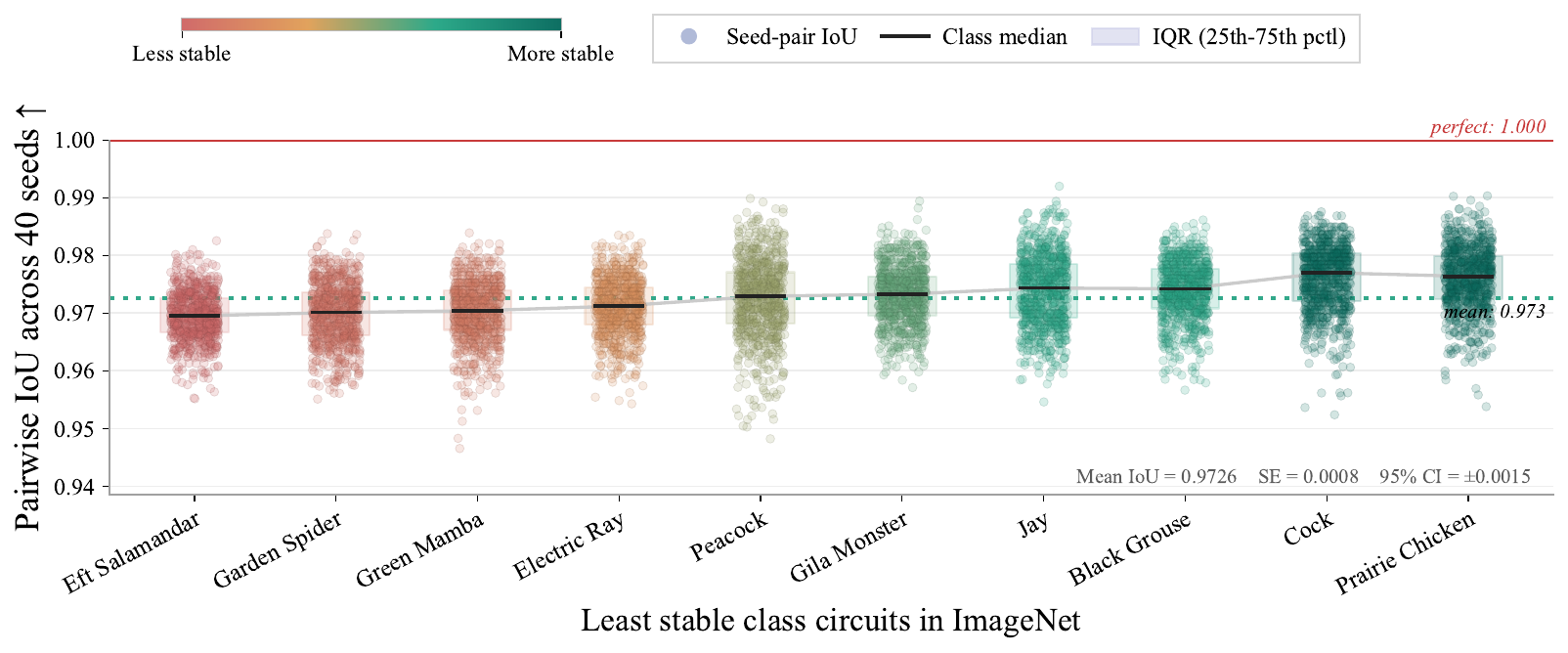}
\caption{\textbf{Certified circuits are highly stable across seeds, even for the least stable classes.} Pairwise IoU between independent runs of the certification pipeline ($n{=}1000$ masks) across 40 seeds, for the 10 least stable ImageNet classes (ordered least to most stable); each dot is one seed-pair. Even in the worst case IoU exceeds $0.95$. The dashed line marks the mean over 100 classes, $0.973 \pm 0.0015$ (95\% CI).}
\label{fig:variance}
\end{figure*}

Recent work has raised concerns that mechanistic circuits are unstable across random seeds: \citet{meloux2025mechanistic} cast circuit discovery as statistical estimation and showed that EAP-IG circuits can vary substantially across runs even with the same concept dataset, indicating high variance in the discovered structure. This calls into question whether a single discovered circuit is meaningful, or merely one realization of a noisy procedure.

We investigate the stability of certified circuits across random seeds on ImageNet (Fig.~\ref{fig:variance}). We run the certification pipeline ($n = 1{,}000$ deletion masks) $40$ times with independent seeds on  $100$  ImageNet classes, and report all pairwise IoUs per class. Even on the $10$ least stable classes, pairwise IoU stays above $0.95$, with mean IoU $0.973 \pm 0.0015$ ($95\%$ CI) across all $100$ classes. Certification is stable by construction: components are included only when their inclusion probability $p_u(D)$ exceeds the threshold $\tau$, so noise that would flip a borderline inclusion under a fixed top-$K$ rule instead resolves to abstention. Certification stabilizes circuits not only against bounded dataset edits (Theorem~\ref{thm:cert-circuit-robustness}), but also empirically against the seed-level instability documented in prior work.

\subsection{Extending circuits to ViT-B/16 architecture}
\label{app:vit}
\begin{figure*}[!ht]
    \centering
    \includegraphics[width=\linewidth]{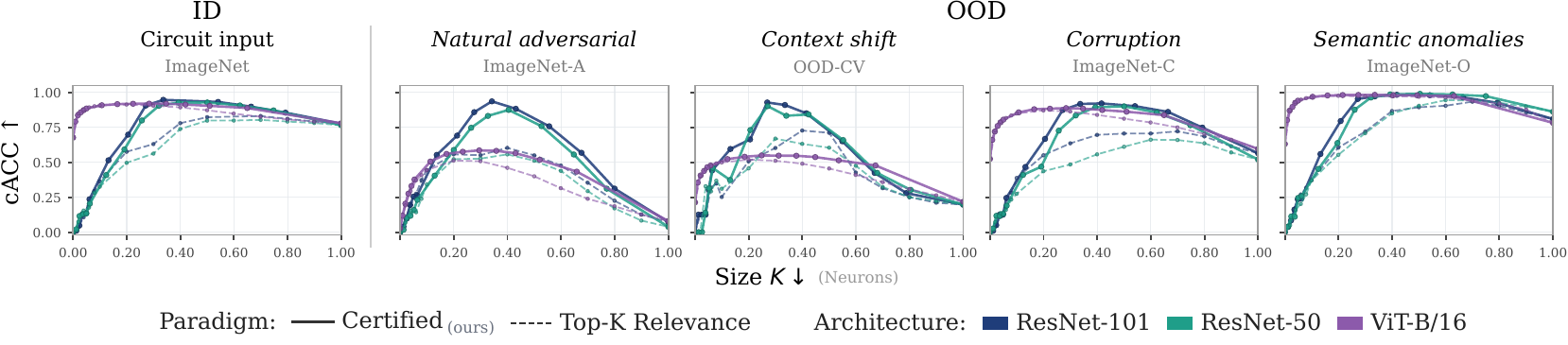}
\caption{\textbf{Circuit accuracy (cACC) vs.\ size $K$ on ResNets and ViT-B/16.} Solid lines show certified circuits, dashed show the baseline, with colors distinguishing models. Circuits are discovered on ImageNet and evaluated either ID or on OOD data.}
    \label{fig:cacc-k-vision-vit}
\end{figure*}

Fig.~\ref{fig:cacc-k-vision-vit} extends the cACC--$K$ curves of Fig.~\ref{fig:cacc-k-vision} to ViT-B/16~\citep{dosovitskiy2021image}, alongside ResNet-101 and ResNet-50. Across all five datasets, certified circuits on ViT-B/16 closely track baseline top-$K$ relevance: peak cACC differs by at most $\sim$$2$ points, and both methods peak at small $K$. The largest certified gains appear on ImageNet-A and OOD-CV, where certification recovers $\sim$$2$ points of peak cACC at substantially smaller $K$ than the baseline, mirroring the trend observed on ResNets and confirming that the framework remains effective on vision Transformer architectures.

\subsection{Certified Vision Circuits at Larger Radii}
\label{app:vision-large-r}

In the main reported results ~\S\ref{sec:results}, we use a conservative certified radius $r = 1$ for vision, which guarantees stability against a single dataset edit. Whether the framework can certify larger radii on vision depends on the relationship between the deletion probability $p_{\text{del}}$, the confidence threshold $\tau$, and the concept dataset size $|\mathcal{D}_c|$. Increasing $r$ at a fixed $\tau$ requires increasing $p_{\text{del}}$ (Eq.~\ref{eq:r}), which removes more examples per sample and can starve the base algorithm on small concept datasets. We characterize this trade-off across three experiments.

\paragraph{Scaling to larger radii with larger $|\mathcal{D}_c|$.}
\begin{figure*}[ht!]
    \centering
    \includegraphics[width=\linewidth]{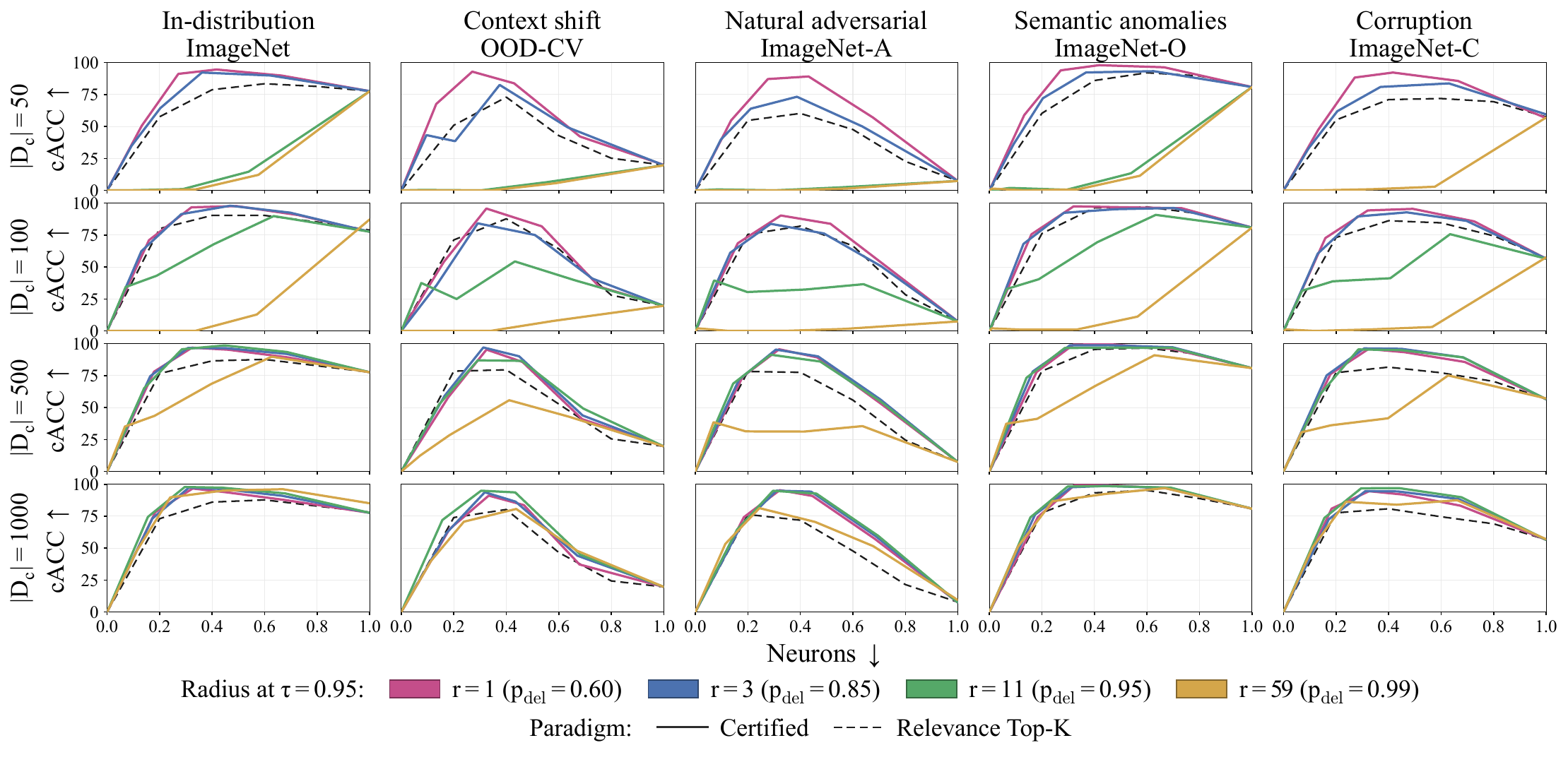}
\caption{\textbf{Certified circuits scale to large radii given sufficient concept data.} cACC vs.\ circuit size for $r \in \{1, 3, 11, 59\}$ (colors) and $|\mathcal{D}_c| \in \{50, 100, 500, 1000\}$ (rows), across five ID/OOD benchmarks (columns). Solid: certified; dashed: baseline. Large radii degrade at small $|\mathcal{D}_c|$ but recover by $|\mathcal{D}_c| = 1{,}000$. ResNet-101, $\tau = 0.95$, $n = 1{,}000$.}
\label{fig:vision_large_radii}
\end{figure*}

Fig.~\ref{fig:vision_large_radii} sweeps $r \in \{1, 3, 11, 59\}$ across $|\mathcal{D}_c| \in \{50, 100, 500, 1000\}$ on all five vision benchmarks. At $|\mathcal{D}_c| = 50$, larger radii ($r = 11, 59$) degrade noticeably because the base algorithm receives sub-datasets of only $\sim$$0$--$3$ images per sample after deletion, too sparse to recover stable circuits. This degradation disappears as $|\mathcal{D}_c|$ grows: by $|\mathcal{D}_c| = 1{,}000$, certified circuits at $r = 59$ outperform the baseline on every dataset. The framework is not intrinsically limited to small radii on vision; the limitation is concept dataset size, and increasing it directly enables larger certified radii.

\paragraph{Effect on circuit size and abstention at fixed $K$.}
\begin{figure*}[ht!]
    \centering
    \includegraphics[width=\linewidth]{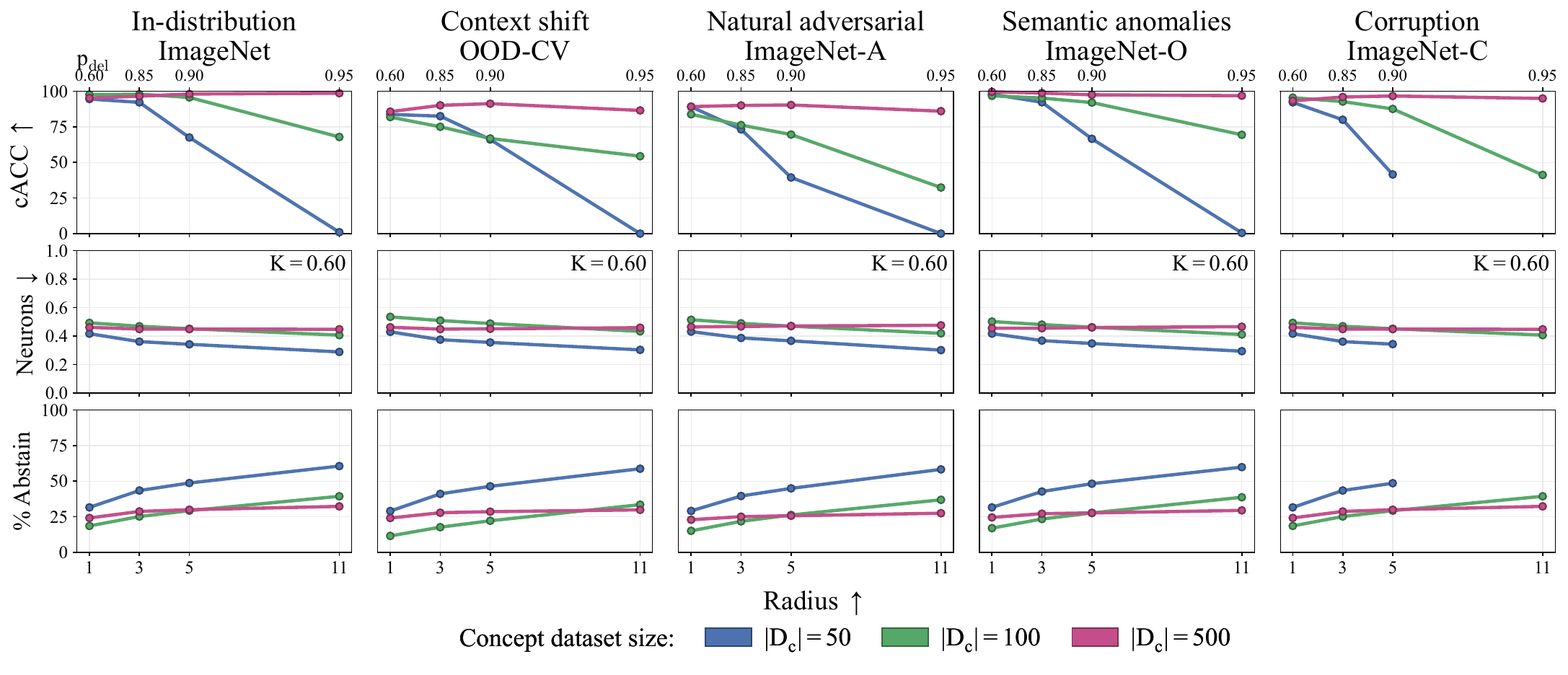}
\caption{\textbf{Effect of radius and $|\mathcal{D}_c|$ at fixed $K = 0.60$.} cACC (top), effective circuit size (middle), and abstention rate (bottom) as a function of certified radius for $|\mathcal{D}_c| \in \{50, 100, 500\}$. Larger $|\mathcal{D}_c|$ stabilizes both cACC and abstention at high radii; effective circuit size stays roughly constant. ResNet-101, $\tau = 0.95$, $n = 1{,}000$.}
\label{fig:radius_fixed_K}
\end{figure*}

Fig.~\ref{fig:radius_fixed_K} fixes the circuit size at $K = 0.60$ and examines how cACC, effective circuit size, and abstention rate vary with radius for three concept dataset sizes. Three patterns are visible. First, at small $|\mathcal{D}_c| = 50$, cACC drops sharply as $r$ grows past $\sim$$3$, while abstention rises toward $50$--$75\%$ — the certificate becomes too aggressive for the available data. Second, at $|\mathcal{D}_c| = 100$ the cACC remains close to the baseline up to $r = 5$ and then drops. Third, at $|\mathcal{D}_c| = 500$ cACC is stable across the full radius range $r \in [1, 11]$, with abstention staying below $30\%$. Effective circuit size (middle row) remains roughly constant in all three settings, indicating that the certification budget is spent on abstaining from unstable components rather than shrinking the certified set.

\paragraph{Effect at the per-setting optimal $K$.}
\begin{figure*}[ht!]
    \centering
    \includegraphics[width=\linewidth]{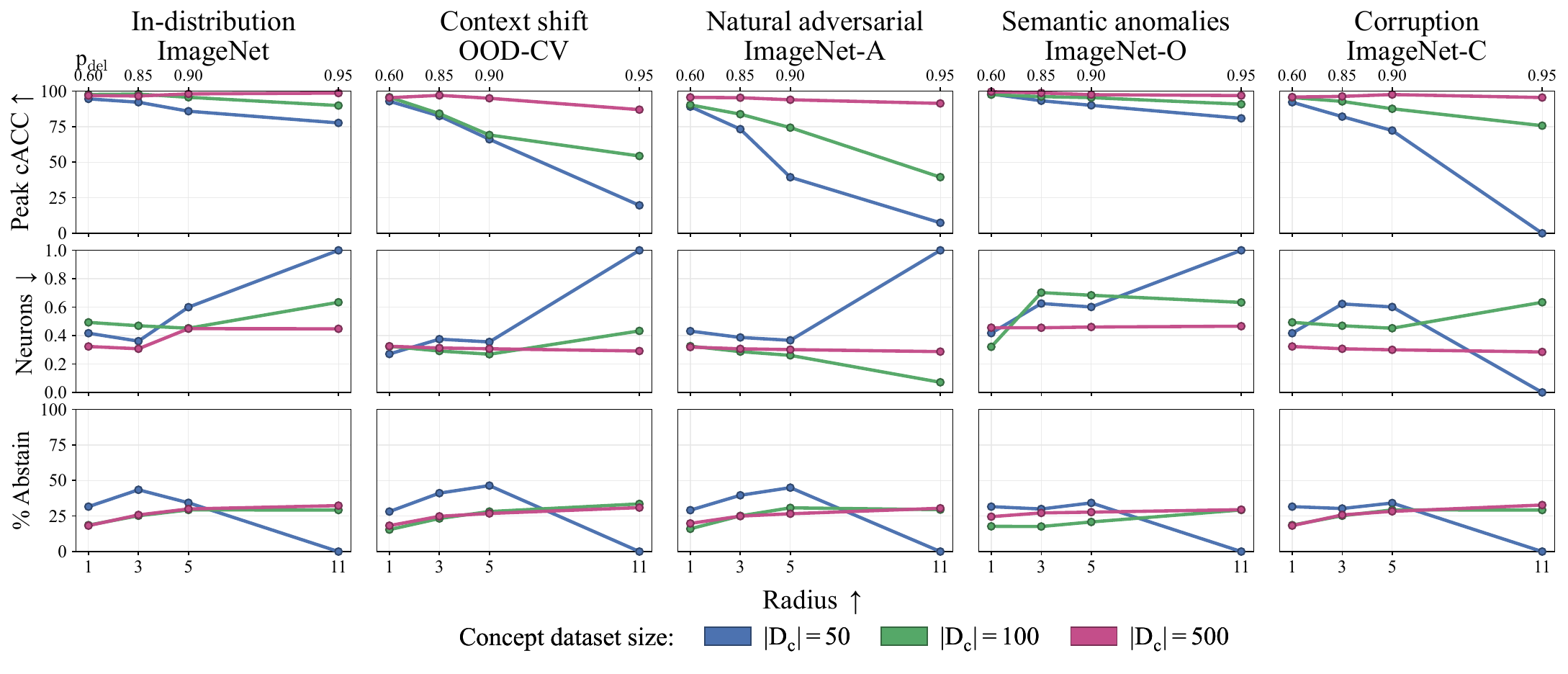}
\caption{\textbf{Effect of radius and $|\mathcal{D}_c|$ at the per-setting optimal $K$.} Peak cACC (top), corresponding circuit size (middle), and abstention rate (bottom) as a function of certified radius. With $K$ optimized per setting, higher radii yield more compact circuits without sufficiency loss when $|\mathcal{D}_c|$ is large enough. ResNet-101, $\tau = 0.95$, $n = 1{,}000$.}
\label{fig:radius_optimal_K}
\end{figure*}

Fig.~\ref{fig:radius_optimal_K} repeats the analysis with $K$ optimized per $(r, |\mathcal{D}_c|, \text{dataset})$ setting, reporting peak cACC. The picture is consistent with the fixed-$K$ analysis but more favorable: with the freedom to shrink the circuit at higher radii, peak cACC degrades less, and the optimal circuit size visibly contracts as $r$ grows (middle row). This is the behavior we want from a certification procedure: at higher radii, more components are flagged as unstable and abstained from, yielding more compact circuits with no sufficiency penalty. At $|\mathcal{D}_c| = 500$, peak cACC is essentially flat across $r \in [1, 11]$ on every benchmark.

\paragraph{Takeaway.}
Certified vision circuits scale to large radii given sufficient concept data: with large enough $|\mathcal{D}_c|$, certified circuits outperform baselines on both ID and OOD at $r$ up to $59$, while providing stronger guarantees.

\subsection{Comparison to Vanilla Majority Vote}
\label{app:majority_vote}
\begin{figure*}[ht!]
    \centering
    \includegraphics[width=1.0\linewidth]{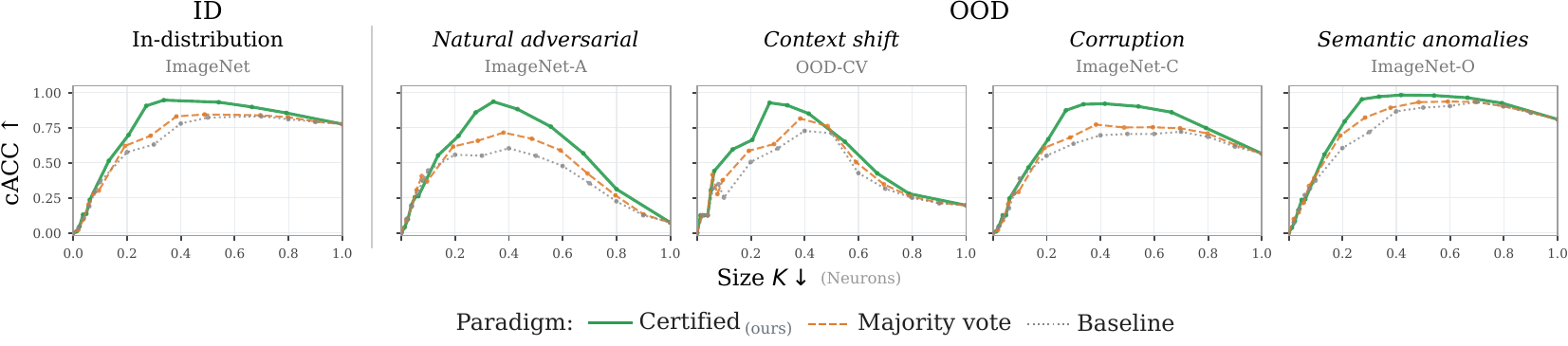}
\caption{\textbf{Certified circuits outperform majority vote at identical compute.} cACC vs.\ $K$ for certified, majority vote (same $n$, $>$50\% inclusion, no $\tau$), and baseline across five benchmarks. The gap between certified and majority vote isolates the contribution of certification (through the threshold $\tau$) beyond ensemble averaging.}
\label{fig:majority_vote_sufficiency}
\end{figure*}

A natural question is whether the gains of certified circuits come from certification itself, or simply from the ensemble averaging effect of running the base algorithm on $n$ subsamples. To isolate this, we compare certified circuits against a majority vote baseline at matched compute. Majority vote uses the same $n = 1{,}000$ deletion samples but includes each neuron appearing in more than $50\%$ of sampled circuits, with no confidence threshold — isolating the contribution of $\tau$ from ensemble averaging alone. Fig.~\ref{fig:majority_vote_sufficiency} shows certified circuits outperform majority vote by a wide margin across all five benchmarks in ID and OOD, with the largest gaps on OOD-CV and ImageNet-A where instability is most pronounced. This is notable because prior certification methods typically trade accuracy for robustness \citep{lecuyer2019certified, cohen2019certified, fischer, anani2024adaptive}, whereas certified circuits improve both. Majority vote already improves over the baseline thanks to averaging, but stops short of the certified circuit, indicating that $\tau$ contributes beyond ensemble averaging by excluding neurons whose inclusion votes are split rather than majority-aligned. These are the neurons most likely to be spurious, and abstaining from them is what makes the certified circuit both more accurate and more compact. We provide further feature visualizations of such neurons in the next App.~\ref{app:qual}.
\clearpage
\newpage
\subsection{Additional feature visualizations}
\label{app:qual}
\begin{figure*}[!ht]
    \centering
    \begin{subfigure}[t]{0.48\linewidth}
        \centering
        \includegraphics[width=\linewidth]{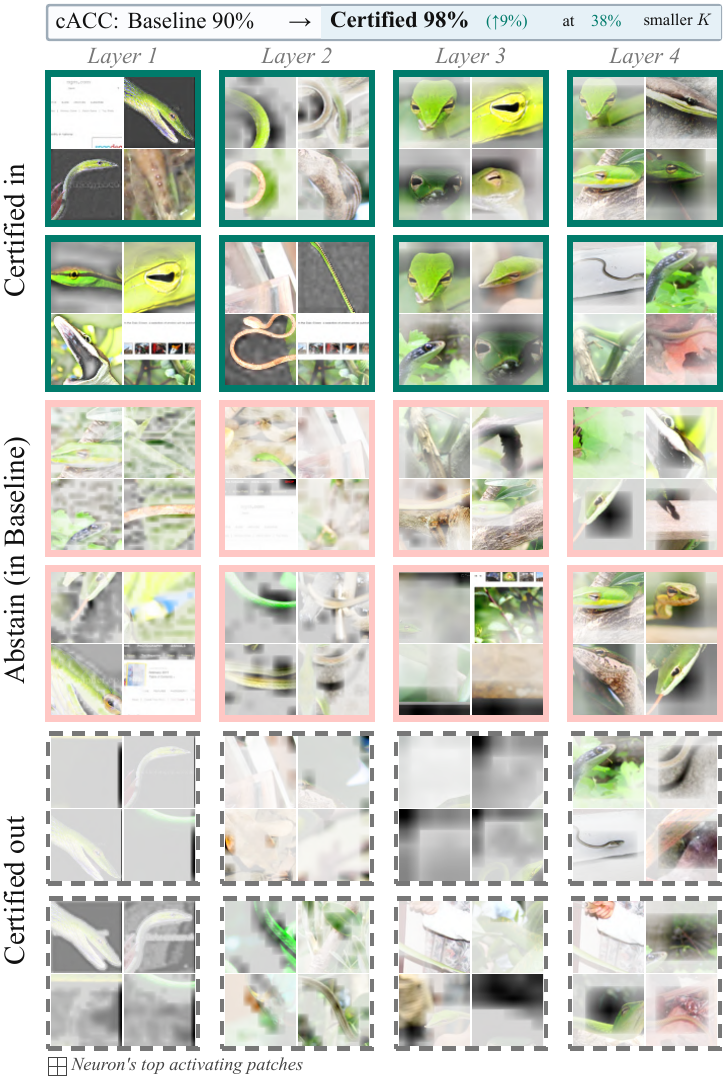}
        \caption{\texttt{vine snake}}

        \label{fig:circuit_class1}
    \end{subfigure}
    \hfill
    \begin{subfigure}[t]{0.48\linewidth}
        \centering
        \includegraphics[width=\linewidth]{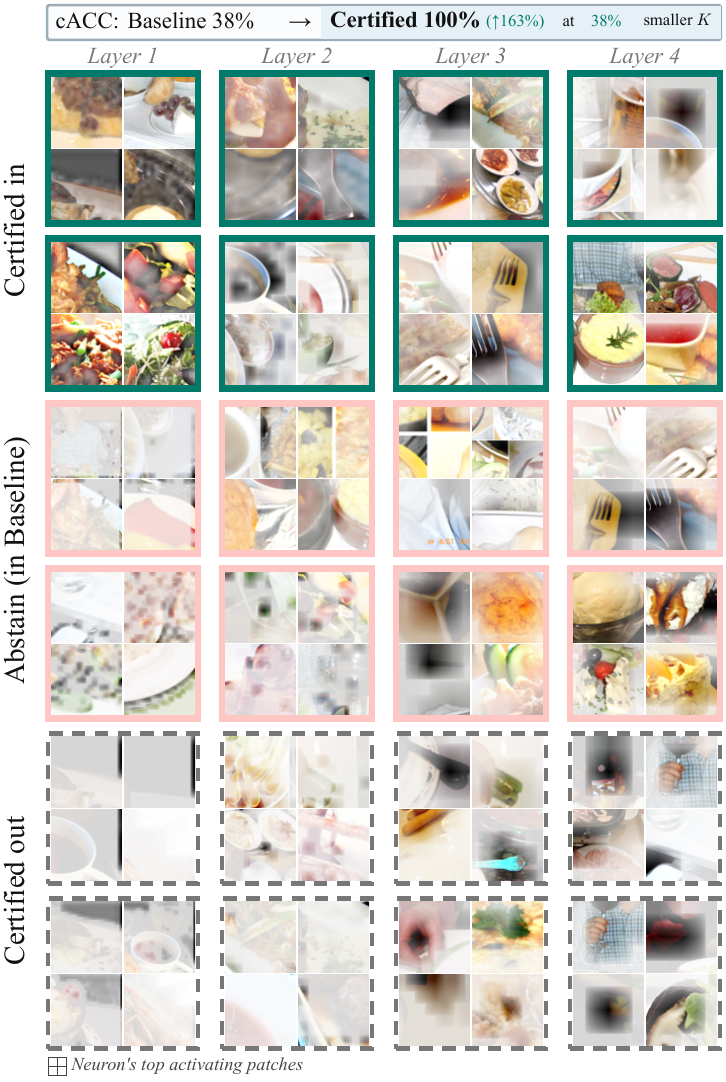}
        \caption{\texttt{plate}}
        \label{fig:circuit_class2}
    \end{subfigure}
    \caption{\textbf{Features encoded by certified-in, abstain (within baseline), and certified-out neurons} (ResNet-101). Per layer, top-activating neurons in each category. Certification improves cACC while shrinking the circuit by abstaining from unstable, spurious neurons.}
    \label{fig:additional_circuits1}
\end{figure*}

\begin{figure*}[!ht]
    \centering
    \begin{subfigure}[t]{0.48\linewidth}
        \centering
        \includegraphics[width=\linewidth]{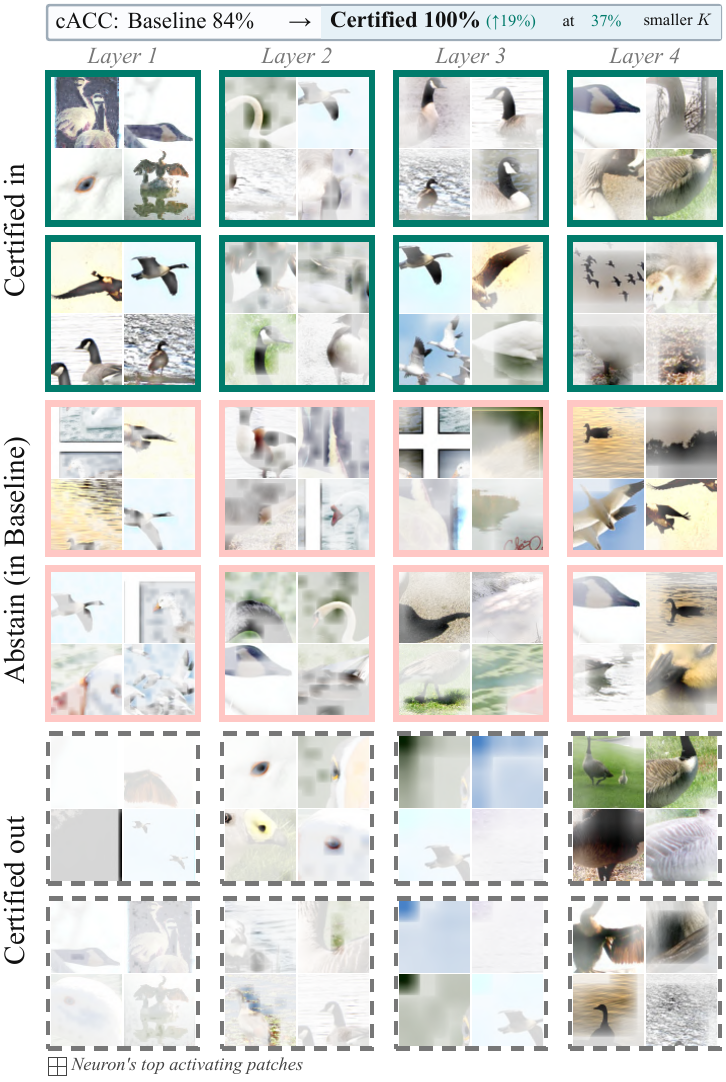}
        \caption{\texttt{goose}}
        \label{fig:circuit_class3}
    \end{subfigure}
    \hfill
    \begin{subfigure}[t]{0.48\linewidth}
        \centering
        \includegraphics[width=\linewidth]{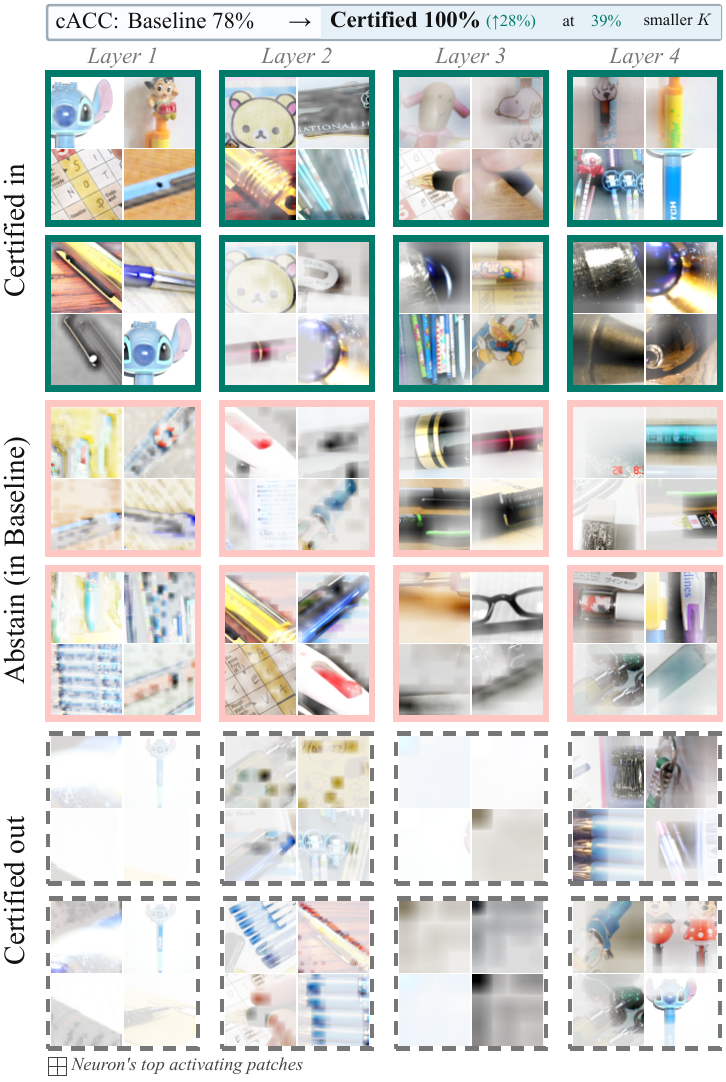}
        \caption{\texttt{ballpoint}}
        \label{fig:circuit_class4}
    \end{subfigure}
    \caption{\textbf{Features encoded by certified-in, abstain (within baseline), and certified-out neurons} Extension of App. Fig.~\ref{fig:additional_circuits1}}
    \label{fig:additional_circuits2}
\end{figure*}

\begin{figure*}[!ht]
    \centering
    \begin{subfigure}[t]{0.48\linewidth}
        \centering
        \includegraphics[width=\linewidth]{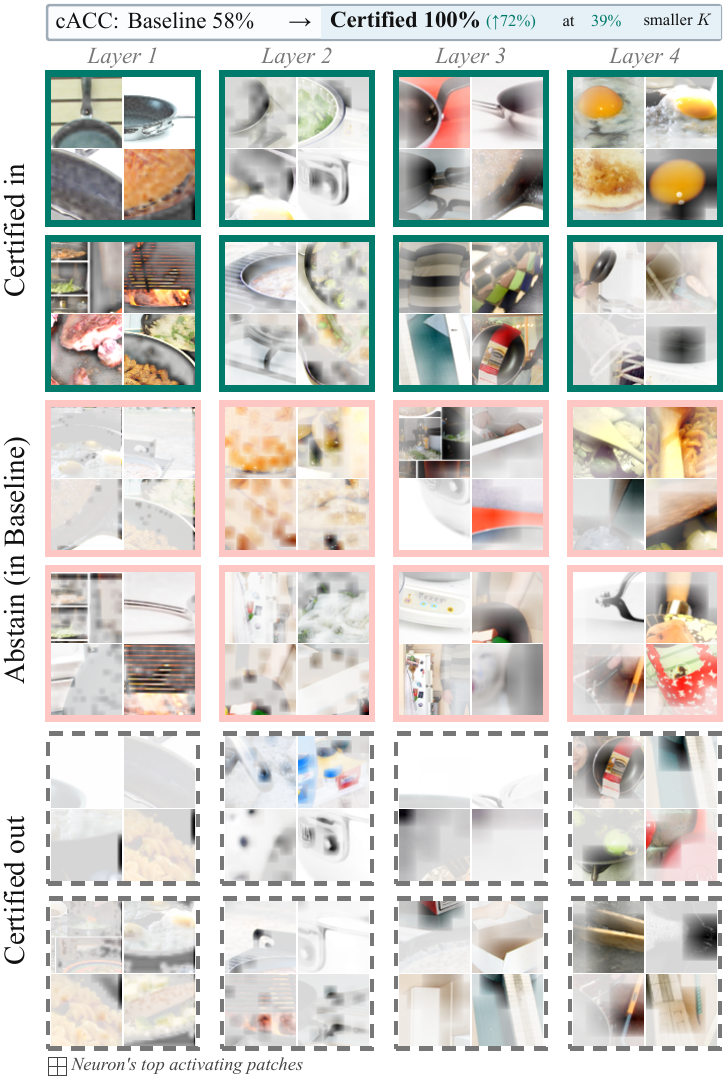}
        \caption{\texttt{frying pan}}
        \label{fig:circuit_class5}
    \end{subfigure}
    \hfill
    \begin{subfigure}[t]{0.48\linewidth}
        \centering
        \includegraphics[width=\linewidth]{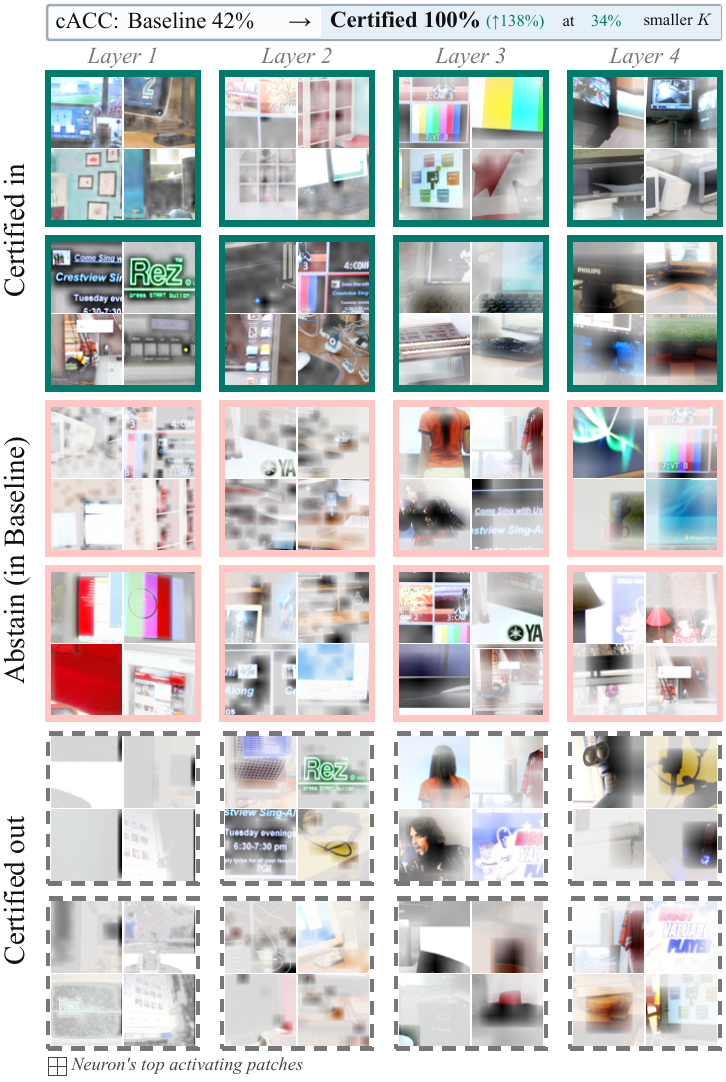}
        \caption{\texttt{monitor}}
        \label{fig:circuit_class6}
    \end{subfigure}
    \caption{\textbf{Features encoded by certified-in, abstain (within baseline), and certified-out neurons} Extension of App. Fig.~\ref{fig:additional_circuits2}}
    \label{fig:additional_circuits3}
\end{figure*}

\end{document}